\setlist{parsep = -0em, itemsep = 0.25em}
\newtheorem{theorem}{Theorem}[section]
\newtheorem{proposition}[theorem]{Proposition}
\newtheorem{lemma}[theorem]{Lemma}
\DeclareMathOperator{\argmin}{argmin}
\title{A Functional Perspective on Learning Symmetric Functions with Neural Networks}
\author[a]{Aaron Zweig}
\author[a,b]{Joan Bruna \thanks{This work is partially supported by the Alfred P. Sloan Foundation, NSF RI-1816753, NSF CAREER CIF 1845360, and the Institute for Advanced Study.}}
\affil[a]{Courant Institute of Mathematical Sciences, New York
  University, New York}
\affil[b]{Center for Data Science, New York University}
\newcommand{\N}{\mathbb{N}}
\def\FF{\mathcal{F}} 
\def\A{\mathcal{A}} 
\def\S{\mathcal{S}}
\def\R{\mathbb{R}}
\def\E{\mathbb{E}}
\def\I{\mathbb{I}}
\def\F{\mathcal{F}_{\mathrm{sym}}}
\def\PI{\mathcal{P}(\I)}
\def\xb{\boldsymbol{x}}
\renewcommand*\parencite[1]{\cite{#1}}
\renewcommand*\textcite[1]{\cite{#1}}
\begin{document}

\maketitle

\begin{abstract}
Symmetric functions, which take as input an unordered, fixed-size set, are known to be universally representable by neural networks that enforce permutation invariance.  These architectures only give guarantees for fixed input sizes, yet in many practical applications, including point clouds and particle physics, a relevant notion of generalization should include varying the input size.  In this work we treat symmetric functions (of any size) as functions over probability measures, and study the learning and representation of neural networks defined on measures.  By focusing on shallow architectures, we establish approximation and generalization bounds under different choices of regularization (such as RKHS and variation norms), that capture a hierarchy of functional spaces with increasing degree of non-linear learning. The resulting models can be learned efficiently and enjoy generalization guarantees that extend across input sizes, as we verify empirically. 
  
  
\end{abstract}

\section{Introduction}

Deep learning becomes far more efficient with prior knowledge of function invariants.  This knowledge underlies architectural choices that enforce the invariance or equivariance in the network, including Convolutional Neural Networks \parencite{lecun1998gradient} which encode translation symmetries, and Graph Neural Networks \parencite{scarselli2008graph} which encode conjugate permutation symmetries.  For functions with invariance to permutation of the input elements, several universal architectures encode this invariance by treating the input as a set~\parencite{zaheer2017deep, qi2017pointnet}.  However, these formulations assume a constant input size, which precludes learning an entire family of symmetric functions.

Such symmetric functions appear naturally across several domains, including particle physics, computer graphics, population statistics and cosmology. Yet, in most of these applications, the input size corresponds to a sampling parameter that is independent of the underlying symmetric function of interest. As a motivating example, consider the function family induced by the max function, where for varying $N$, $f_N(\{x_1\dots x_N\}) = \max_{i\leq N} x_i$.  It is natural to ask if a network can simultaneously learn all these functions.

In this work, we interpret input sets as an empirical measure defined over the base space $\I$, and develop families of neural networks defined over the space of probability measures \emph{probability measures of $\I$}, as initially suggested in \textcite{pevny2019approximation,de2019stochastic}. We identify functional spaces characterized by neural architectures and provide generalization bounds that showcase a natural hierarchy among spaces of symmetric functions. In particular, our framework allows us to understand the question of generalizing across input sizes as a corollary. Our constructions rely on the  theory of infinitely wide neural networks \parencite{bengio2006convex, rosset2007,bach2017breaking}, and provide a novel instance of \emph{depth separation} leveraging the symmetric structure of the input. 

\paragraph{Summary of Contributions:} 
We consider the infinite-width limit of neural networks taking as domain the space of probability measures in order to formalize learning of symmetric function families.  We prove a necessary and sufficient condition for which symmetric functions can be learned.  By controlling the amount of non-linear learning, we partition the space of networks on measures into several function classes, proving a separation result among the classes as well as proving a generalization result and empirically studying the performance of these classes to learn symmetric functions on synthetic and real-world data.



\paragraph{Related Work}

Several works consider representing symmetric functions of fixed input size with invariant neural networks, and in particular there are two main universal architectures, DeepSets~\parencite{zaheer2017deep} and PointNet~\parencite{qi2017pointnet}.  An alternative generalization of DeepSets is given in~\textcite{maron2019universality}, which proves the universality of tensor networks invariant to any subgroup of the symmetric group.  Regarding variable input size, the work from~\textcite{wagstaff2019limitations} proves lower bounds on representation of the max function in the DeepSets architecture with a dependency on input size.

Separately, there is a wide literature considering neural networks that act on elements on functional data.  These results mainly consider universal approximation~\textcite{sandberg1996network, stinchcombe1999neural, rossi2005functional}.  The work~\textcite{mhaskar1997neural} bears some similarity to the present work, as they prove a quantitative separation between the class of neural networks and the class of functionals with bounded norm, while our main result shows separations among several neural network classes.

The work most similar to ours are~\textcite{pevny2019approximation,de2019stochastic}, which also normalize the DeepSets architecture to define a function on measures.  However, they only prove the universality of this model, while we justify the model by classifying symmetric families that are representable and recovering generalization results.  We also build on the framework given by~\textcite{bach2017breaking}, which introduces function classes to characterize neural networks in the wide limit, and proves statistical generalization bounds to demonstrate the advantage of non-linear learning.  Although we motivate our work from symmetric functions on finite sets, there are applications in multi-label learning~\parencite{frogner2015learning} and evolving population dynamics~\parencite{hashimoto2016learning} that require functions of measures.

\paragraph{Roadmap:} We introduce notation and summarize the infinite-dimensional convex network theory theory~\parencite{bach2017breaking} in Section~\ref{sec:prelim}.  In Section~\ref{sec:set2measure} we introduce measure networks and characterize their relationship to symmetric functions.  Section~\ref{sec:approx} gives our main result, separating several classes of measure networks according to the degree of non-linear learning, and Section~\ref{sec:gen} introduces generalization results.  Finally, we detail several experiments with the finite instantiation of measure networks in Section~\ref{sec:experiments}.
\section{Preliminaries}\label{sec:prelim}

\subsection{Problem Setup}

Let $\I \subseteq \R^d$ be a convex domain, and $N \in \N$. A \emph{symmetric} function 
$f: \I^{ N} \to \R$ is such that $f(x_1, \dots x_N) = f(x_{\pi(1)},\dots x_{\pi(N)})$ 
for any $x \in \I^{N}$ and any permutation $\pi \in \mathcal{S}_N$. In this work, we are interested in learning symmetric functions defined independently of $N$.  Let $\overline{\I}= \bigcup_{N=1}^\infty \I^N$, then $f: \overline{\I} \to \R$ is symmetric if $f$ restricted to  $\I^{ N}$ is symmetric for each $N \in \N$. Let $\mathcal{F}_{\mathrm{sym}}$ 
denote the space of symmetric functions defined on $\overline{\I}$. 
This setting is motivated by applications in statistical mechanics and particle physics, where $N$ is a sampling parameter. 

We focus on the realizable regression setting, where we observe a dataset $\{(\xb_i, f^*(\xb_i)) \in \overline{\I} \times \R \}_{i=1,\dots n}$ of $n$ samples from an unknown symmetric function $f^*$, and $\xb_i$ are drawn iid from a distribution $\mathcal{D}$ on $\overline{\I}$. The goal is to find a proper estimator $\hat{f} \in \mathcal{F}_{\mathrm{sym}}$ such that the population error
$\E_{\xb \sim \mathcal{D}} \ell(f^*(\xb), \hat{f}(\xb))$
is low, where $\ell$ is a convex loss. 

Following a standard Empirical Risk Minimisation setup \parencite{shalev2014understanding,bach2017breaking}, we will construct hypothesis classes $\mathcal{F} \subset \mathcal{F}_{\mathrm{sym}}$ endowed with a metric $\| f\|_\mathcal{F}$, and consider 
\begin{equation}
\label{eq:erm}
    \hat{f} \in \argmin_{f \in \mathcal{F}; \| f\|_{\mathcal{F}} \leq \delta} \frac{1}{n}\sum_{i=1}^n \ell( f^*(\xb_i), f(\xb_i) )~,
\end{equation}
where $\delta$ is a regularization parameter that is optimised using e.g. cross-validation.
We focus on the approximation and statistical aspects of this estimator for different choices of $\mathcal{F}$; solving the optimization problem (\ref{eq:erm}) is not the focus of the present work and will be briefly discussed in Section \ref{sec:conclusion}. 


\subsection{Symmetric Polynomials}
\label{sec:symmpoly}
A simplest way to approximate symmetric functions is with symmetric polynomials. Combining Weierstrass approximation theory with a symmetrization argument, it can be seen that assuming $d = 1$, any symmetric continuous function $f: \I^N \rightarrow \mathbb{R}$ can be uniformly approximated by symmetric polynomials (see~\textcite{yarotsky2018universal} for a proof).  There are several canonical bases over the ring of symmetric polynomials, but we will consider the one given by the power sum polynomials, given by $p_k(x) = \sum_{i=1}^N x_i^k$, with $x \in \I^N$.

\begin{theorem}[(2.12) in~\textcite{macdonald1998symmetric}]
    For any symmetric polynomial $f$ on $N$ inputs, there exists a polynomial $q$ such that $f(x) = q(p_1(x), \dots, p_N(x))$.
\end{theorem}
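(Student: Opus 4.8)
The plan is to prove that the power sum polynomials $p_1, \dots, p_N$ generate the ring of symmetric polynomials in $N$ variables, so that every symmetric polynomial is a polynomial in them. The standard approach goes through the elementary symmetric polynomials $e_1, \dots, e_N$, which are already known to freely generate the ring $\Lambda_N$ of symmetric polynomials (the fundamental theorem of symmetric polynomials). Thus it suffices to show that each $e_k$ can be written as a polynomial in $p_1, \dots, p_k$, since then any polynomial in the $e_k$'s is a polynomial in the $p_k$'s.

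First I would establish the Newton--Girard identities relating the two families. The cleanest route is via generating functions: writing $E(t) = \prod_{i=1}^N (1 + x_i t) = \sum_{k \geq 0} e_k t^k$ and $P(t) = \sum_{k \geq 1} p_k t^{k-1} = \sum_{i=1}^N \frac{x_i}{1 - x_i t}$, one computes $\frac{d}{dt}\log E(t) = \sum_i \frac{x_i}{1 + x_i t}$, and after the sign substitution $t \mapsto -t$ this yields the identity $E'(t) = E(t) \cdot \big(\sum_{k\geq 1} (-1)^{k-1} p_k t^{k-1}\big)$ (all identities holding in $\mathbb{R}[x_1,\dots,x_N][[t]]$, or as formal manipulations valid over any field of characteristic zero). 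Comparing coefficients of $t^{k-1}$ gives the recursion $k\, e_k = \sum_{i=1}^{k} (-1)^{i-1} e_{k-i}\, p_i$.

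Next I would run an induction on $k$ using this recursion. Since we are working over $\mathbb{R}$ (characteristic zero), we may divide by $k$, obtaining $e_k = \frac{1}{k}\sum_{i=1}^{k} (-1)^{i-1} e_{k-i} p_i$. The base case $e_0 = 1$ is trivial, and the inductive step expresses $e_k$ in terms of $p_1, \dots, p_k$ and $e_0, \dots, e_{k-1}$, each of which is already a polynomial in the power sums by the inductive hypothesis. Hence every $e_k$ for $k \le N$ is a polynomial in $p_1, \dots, p_N$. Combined with the fundamental theorem of symmetric polynomials, which says $f(x) = r(e_1(x), \dots, e_N(x))$ for some polynomial $r$, we obtain $f(x) = q(p_1(x), \dots, p_N(x))$ by substitution, where $q$ is the composition of $r$ with the polynomial expressions for the $e_k$'s.

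The main obstacle — really the only subtle point — is the use of division by the integers $1, 2, \dots, N$ in the Newton--Girard recursion; this is why the characteristic-zero hypothesis (here, working over $\mathbb{R}$) is essential, and over a field of positive characteristic the statement genuinely fails. Everything else is a routine formal power series computation plus an invocation of the classical fundamental theorem of symmetric polynomials (which itself is proved by a lexicographic-leading-term elimination argument, but may be quoted here). A mild bookkeeping point is that one should track that only $p_1, \dots, p_k$ appear when expressing $e_k$, so that for $k \le N$ we never need power sums beyond $p_N$; this is immediate from the shape of the recursion.
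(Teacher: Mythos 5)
Your proof is correct and is the standard argument: express everything in elementary symmetric polynomials via the fundamental theorem, then convert $e_k$ to a polynomial in $p_1,\dots,p_k$ via the Newton--Girard recursion, using characteristic zero to divide by $k$. The paper does not give its own proof of this statement — it simply cites Macdonald's book — and Macdonald's proof of (2.12) proceeds by essentially the same generating-function / Newton's-identities route you use, so your approach matches the cited source.
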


If $q$ is linear, this theorem suggests a simple predictor for symmetric functions across varying $N$.  If $x \in \I^M$, we can consider $x \mapsto \sum_{i=1}^N c_i \left(\frac{1}{M} p_i(x)\right) = \sum_{i=1}^N c_i \E_{y\sim\mu}(y^i)$ where $\mu = \frac{1}{M} \sum_{j=1}^M \delta_{x_j}$.  The truncated moments of the empirical distribution given by $x$ act as linear features, which yield an estimator over any input size $M$.  We will consider a generalization of this decomposition, by moving beyond the polynomial kernel to a general RKHS (see Section~\ref{sec:neural_functional_spaces}).





\subsection{Convex Shallow Neural Networks}\label{sec:convex_shallow}

By considering the limit of infinitely many neurons \textcite{bengio2006convex,rosset2007}, \textcite{bach2017breaking} introduces two norms on shallow neural representation of functions $\phi$ defined over $\R^d$.  For a constant $R \in \mathbb{R}$, a fixed probability measure $\kappa \in \mathcal{P}(\mathbb{S}^{d})$ with full support, a signed Radon measure $\nu \in \mathcal{M}(\mathbb{S}^{d})$, a density $p \in L_2(d \kappa)$,
and the notation that $\tilde{x} = [x, R]^T$, define:
\begin{align}
    \gamma_1(\phi) = \inf \left\{ \| \nu \|_{\mathrm{TV}}; \, \phi(x) = \int_{\mathbb{S}^{d}} \sigma( \langle w, \tilde{x} \rangle) 
    \nu(dw) \right\}~, \text{ and } \\
    \gamma_2(\phi) = \inf \left\{ \| p \|_{L_2(d \kappa)}; \, \phi(x) = \int_{\mathbb{S}^{d}} \sigma( \langle w, \tilde{x} \rangle) 
    p(w) \kappa(dw) \right\}~,
\end{align}


where $\| \nu \|_{\mathrm{TV}}:= \sup_{|g|\leq 1} \int g d\nu$ is the \emph{Total Variation} of $\nu$ and $\sigma_\alpha(t) = \max(0, t)^\alpha$ is the ReLU activation raised to the positive integer power $\alpha$.  These norms measure the minimal representation of $\phi$, using either a Radon measure $\nu$ over neuron weights, or a density $p$ over the fixed probability measure $\kappa$.  The norms induce function classes:
\begin{equation}
    \mathcal{F}_1  = \{\phi \in C_0(\I): \gamma_1(\phi) < \infty\} ~,\text{ and}~
    \mathcal{F}_2  = \{\phi \in C_0(\I): \gamma_2(\phi) < \infty\}~.
\end{equation}
We also assume that the input domain $\I$ is bounded with $\sup_{x\in\I} \|x\|_2 \leq R$.


These two functional spaces are fundamental for the theoretical study of shallow neural networks and capture two distinct regimes of overparametrisation: whereas the so-called \emph{lazy} or kernel regime corresponds to learning in the space $\mathcal{F}_2$ \parencite{chizat2018note,jacot2018neural}, which is in fact an RKHS with kernel given by $k(x,y)= \E_{w \sim \kappa} \left[\sigma_\alpha( \langle w, \tilde{x} \rangle) \sigma_\alpha( \langle w, \tilde{y} \rangle)\right]$ \parencite{bach2017breaking}  \footnote{Or a modified NTK kernel that also includes gradients with respect to first-layer weights \parencite{jacot2018neural} }, the mean-field regime captures learning in $\mathcal{F}_1$, which satisfies $\mathcal{F}_2 \subset \mathcal{F}_1$ from Jensen's inequality, and can efficiently approximate functions with hidden low-dimensional structure, as opposed to $\mathcal{F}_2$ \parencite{bach2017breaking}. 

Finally, one can leverage the fact that the kernel above is an expectation over features to define a finite-dimensional random feature kernel 
$k_m(x,y) = \frac{1}{m} \sum_{j=1}^m \sigma_\alpha( \langle w_j, \tilde{x} \rangle) \sigma_\alpha( \langle w_j, \tilde{y} \rangle)$ with $w_j \stackrel{i.i.d.}{\sim} \kappa$, which defines a (random) RKHS $\mathcal{F}_{2,m}$ converging to $\mathcal{F}_2$ as $m$ increases \parencite{bach2017equivalence, rahimi2008random}.  The empirical norm $\gamma_{2,m}$ can be defined similarly to $\gamma_2$, where the density $p$ is replaced by coefficients over the sampled basis functions $\sigma_\alpha(\langle w_j, \cdot \rangle)$.



\subsection{Symmetric Neural Networks}
\label{sec:symmNN}

A universal approximator for symmetric functions was proposed by ~\textcite{zaheer2017deep}, which proved that for any fixed $N$ and $f_N \in \mathcal{F}_{\mathrm{sym}}^N$ there must exist $\Phi: \I \rightarrow \mathbb{R}^L$ and $\rho: \mathbb{R}^L \rightarrow \mathbb{R}$ such that
\begin{equation}\label{eq:oursets}
    f_N(x) = \rho \left(\frac{1}{N} \sum_{n=1}^N \Phi(x_n) \right)~.
\end{equation}
However, universality is only proven for fixed $N$.  Given a symmetric function $f \in \F$ 
we might hope to learn $\rho$ and $\Phi$ such that this equation holds for all $N$.  Note that the fraction $\frac{1}{N}$ is not present in their formulation, but is necessary for generalization across $N$ to be feasible (as otherwise the effective domain of $\rho$ could grow arbitrarily large as $N \rightarrow \infty$).


Treating the input to $\rho$ as an average motivates moving from sets to measures as inputs, as proposed in \textcite{pevny2019approximation, de2019stochastic}.
Given $x \in \I^N$, let $\mu^{(N)} = \frac{1}{N} \sum_{i=1}^N \delta_{x_i}$ denote the 
empirical measure in the space $\mathcal{P}(\I)$ of probability measures over $\I$. 
Then (\ref{eq:oursets}) can be written as 
$f_N(x) = \rho\left( \int_{\I} \Phi(u) \mu^{(N)}(du) \right)~.$

\section{From Set to Measure Functions}\label{sec:set2measure}

\subsection{Neural Functional Spaces for Learning over Measures}\label{sec:neural_functional_spaces}

Equipped with the perspective of (\ref{eq:oursets}) acting on an empirical measure, we consider shallow neural networks that take probability measures as inputs, with test functions as weights. We discuss in Section \ref{sec:settomeasuresub} which functions defined over sets admit an extension to functions over measures. 

Let $\mathcal{A}$ be a subset of $C_0(\I)$, equipped with its Borel sigma algebra.  For $\mu \in \PI$, and a signed Radon measure $\chi \in \mathcal{M}(\mathcal{A})$, define ${f}: \PI \rightarrow \mathbb{R}$ as 
\begin{equation}
    {f}(\mu; \chi) = \int_{\mathcal{A}} \widetilde{\sigma}(\langle \phi, \mu \rangle) \chi(d\phi)~.
\end{equation}
where $\widetilde{\sigma}$ is again a scalar activation function, such as the ReLU, and $\langle \phi, \mu \rangle := \int_\I \phi(x) \mu(dx)$.  Crucially, the space of functions given by ${f}(\cdot; \chi)$ were proven to be dense in the space of real-valued continuous (in the weak topology) functions on $\PI$ in~\textcite{pevny2019approximation,de2019stochastic}, and so this network exhibits universality.

Keeping in mind the functional norms defined on test functions in Section~\ref{sec:convex_shallow}, we can introduce analogous norms for neural networks on measures. For a fixed probability measure $\tau \in \mathcal{P}(\mathcal{A})$, define
\begin{equation}
\label{eq:f11}
    \| f\|_{1, \mathcal{A}} = \inf \left\{ \| \chi \|_{\mathrm{TV}}; \,f(\mu) = \!\int_{\mathcal{A}} \widetilde{\sigma}( \langle \phi, \mu \rangle) \chi(d\phi) \right\}~,
\end{equation}
\begin{equation}
\label{eq:f22}
    \| f\|_{2, \mathcal{A}} = \inf \left\{ \| q \|_{L_2}; \, f(\mu) = \!\int_{\mathcal{A}} \widetilde{\sigma}( \langle \phi, \mu \rangle) q(\phi) \tau(d\phi) \right\}~,
\end{equation}
where we take the infima over Radon measures $\chi \in \mathcal{M}(\mathcal{A})$ and densities $q \in L_2(d\tau)$.  Analogously these norms also induce the respective function classes $\mathcal{G}_1(\mathcal{A}) = \{f: \|f\|_{1,\mathcal{A}} < \infty\}$, $\mathcal{G}_2(\mathcal{A}) = \{f: \|f\|_{2,\mathcal{A}} < \infty\}$.  The argument in Appendix A of~\textcite{bach2017breaking} implies $\mathcal{G}_2(\mathcal{A})$ is an RKHS, with associated kernel $k_\mathcal{G}(\mu, \mu') = \int_\mathcal{A} \widetilde{\sigma}( \langle \phi, \mu \rangle) \widetilde{\sigma}( \langle \phi, \mu' \rangle) \tau(d \phi)$.

Moving from vector-valued weights to function-valued weights presents an immediate issue.  The space $C_0(\I)$ is infinite-dimensional, and it is not obvious how to learn a measure $\chi$ over this entire space.  Moreover, our ultimate goal is to understand finite-width symmetric networks, so we would prefer the function-valued weights be efficiently calculable rather than pathological.  To that end, we choose the set of test functions $\mathcal{A}$ to be representable as regular neural networks.

Explicitly, using the function norms of Section~\ref{sec:convex_shallow}, we define

\begin{eqnarray*}
\label{eq:bobo}
    \mathcal{A}_{1,m} &:=& \left\{\phi; ~ \phi(x) = \sum_{j=1}^m \alpha_j \sigma(\langle w_j, \tilde{x} \rangle)~,\,\|w_j\|_2\leq 1, \| \alpha \|_1 \leq 1\right\} \nonumber, \\ 
    \mathcal{A}_{2,m} &:=& \left\{\phi \in \mathcal{F}_{2,m}:\, \gamma_{2,m}(\phi) \leq 1 \right\}~.
\end{eqnarray*}

 $\A_{1,m}$ thus contains functions in the unit ball of $\FF_1$ that can be expressed with $m$ neurons, and $\A_{2,m}$ contains functions in the (random) RKHS $\FF_{2,m}$ obtained by sampling $m$ neurons from $\kappa$. By definition $\A_{2,m} \subset \A_{1,m}$ for all $m$. 
 Representational power grows with $m$,
 and observe that the approximation rate in the unit ball of $\FF_1$ or $\FF_2$ is in $m^{-1/2}$, obtained for instance with Monte-Carlo estimators \parencite{bach2017breaking,ma2019barron}. Hence we can also consider the setting where $m = \infty$, with the notation $\mathcal{A}_{\{i,\infty\}} = \{\phi \in \FF_{i} : \gamma_{i}(\phi) \leq 1\}$.
Note also that there is no loss of generality in choosing the radius to be $1$, as by homogeneity of $\sigma$ any $\phi$ with $\gamma_i(\phi) < \infty$ can be scaled into its respective norm ball.

We now examine the combinations of $\mathcal{G}_i$ with $\mathcal{A}_i$:
\begin{itemize}
    \item $\mathcal{S}_{1,m} := \mathcal{G}_1(\mathcal{A}_{1,m})$; the measure $\chi$ is supported on test functions in $\mathcal{A}_{1,m}$. 
    \item $\mathcal{S}_{2,m} := \mathcal{G}_1(\mathcal{A}_{2,m})$; $\chi$ is supported on test functions in $\mathcal{A}_{2,m}$.
    \item $\mathcal{S}_{3,m} := \mathcal{G}_2(\mathcal{A}_{2,m})$; $\chi$ has a density with regards to $\tau$, which is supported on $\mathcal{A}_{2,m}$.
    \item The remaining class $\mathcal{G}_2(\mathcal{A}_{1,m})$ requires defining a probability measure $\tau$ over $\mathcal{A}_{1,m}$ that sufficiently spreads mass outside of any RKHS ball.  Due to the difficulty in defining this measure in finite setting, we omit this class.
\end{itemize}

Note that from Jensen's inequality and the inclusion $\mathcal{A}_{2,m} \subset \mathcal{A}_{1,m}$ for all $m$, we have the inclusions $\mathcal{S}_{3,m} \subset \mathcal{S}_{2,m} \subset \mathcal{S}_{1,m}$.  And $\mathcal{S}_{3,m}$ is clearly an RKHS, since it is a particular instantiation of $\mathcal{G}_2(\mathcal{A})$.
In the sequel we will drop the subscript $m$ and simply write $\A_i$ and $\S_i$.

These functional spaces provide an increasing level of adaptivity: while $\S_2$ is able to adapt by selecting `useful' test functions $\phi$, it is limited to smooth test functions that lie on the RKHS, whereas $\S_1$ is able to also adapt to more irregular test functions that themselves depend on low-dimensional structures from the input domain. 
We let $\|f\|_{\mathcal{S}_i}$ denote the associated norm, i.e. $\|f\|_{\mathcal{S}_1} := \|f\|_{1, \mathcal{A}_{1}}$.

\paragraph{Finite-Width Implementation:}


For any $m$, these classes admit a particularly simple interpretation when implemented in practice. On the one hand, the spaces of test functions are implemented as a single hidden-layer neural network of width $m$. On the other hand, the integral representations in (\ref{eq:f11}) and (\ref{eq:f22}) are instantiated by a finite-sum using ${m'}$ neurons, leading to the finite 
analogues of our function classes given in Table~\ref{tab:freeze}. Specifically, 
\begin{align*}
    f(\mu) = \frac{1}{m'}\! \sum_{j'=1}^{m'} b_{j'} \widetilde{\sigma}\! \left(\!\!\frac{1}{m} \sum_{j=1}^m c_{j',j} \int \sigma_\alpha( \langle w_{j',j}, \tilde{x} \rangle) \mu(dx)\!\! \right)
\end{align*}

One can verify \parencite{neyshabur_norm-based_2015} that the finite-width proxy for the  variation norm is given by
$$\| f \|_1 = \frac{1}{m'} \! \sum_{j'} |b_{j'}| \| \phi_{j'} \|_1 \leq \frac{1}{m m'} \!\sum_{j',j} |b_{j'}| |c_{j',j}| \| w_{j',j} \|~,$$
which in our case corresponds to the so-called path norm \parencite{neyshabur2014search}.  In particular, under the practical assumption that the test functions $\phi_{j'}$ are parameterized by two-layer networks with shared first layer, the weight vectors $w_{j',j}$ only depend on $j$ and this norm may be easily calculated as a matrix product of the network weights.  We can control this term by constraining the weights of the first two layers to obey our theoretical assumptions (of bounded weights and test functions in respective RKHS balls), and regularize the final network weights.  See Section~\ref{sec:experiments} and the Appendix for practical relaxations of the constraints.


\begin{table}
    \centering
    \begin{tabular}{p{5pt}c|cccccc}
         & & First Layer   &  Second Layer   & Third Layer   \\
         \hline
    & $\mathcal{S}_1$  & Trained  &  Trained &  Trained &  \\
     & $\mathcal{S}_2$  & Frozen &  Trained &  Trained & \\
      & $\mathcal{S}_3$  & Frozen &  Frozen &  Trained \\
    \end{tabular}
    \caption{Training for finite function approximation}
    \label{tab:freeze}
\end{table}

\subsection{Continuous Extension}
\label{sec:settomeasuresub}
In general, the functions we want to represent don't take in measures $\mu \in \PI$ as inputs. 
In this section, we want to understand when a function $f$ defined on the power set $f : \overline{\I} \to \R$ can be extended to a continuous map $\bar{f} : \PI \to \R$ in the weak topology, in the sense that for all $N \in \N$ and all $(x_1, \dots x_N)\in \I^N$, $\bar{f}\left(\frac{1}{N}\sum_{i=1}^N \delta_{x_i} \right) = f(x_1, \dots, x_N)$.

Observe that by construction $\bar{f}$ captures the permutation symmetry of the original $f$. 
Define the mapping $D: \overline{\I} \rightarrow \mathcal{P}(\I)$ by $D(x_1, \dots, x_N) = \frac{1}{N}\sum_{i=1}^N \delta_{x_i}$.  Let $\hat{\mathcal{P}}_N(\I) := D(\I^N)$ and $\hat{\mathcal{P}}(\I) = \bigcup_{N=1}^\infty \hat{\mathcal{P}}_N(\I)$, so that $\hat{\mathcal{P}}(\I)$ is the set of all finite discrete measures.  For $\mu \in \hat{\mathcal{P}}(\I)$, let $N(\mu)$ be the smallest dimension of a point in $D^{-1}(\mu)$, and let $x$ be this point (which is unique up to permutation).  Then define $\hat{f}: \hat{\mathcal{P}}(\I) \rightarrow \mathbb{R}$ such that $\hat{f}(\mu) = f_N(x)$.

We also write $W_1(\mu, \mu')$ as the Wasserstein 1-metric under the $\|\cdot\|_2$ norm~\parencite{villani2008optimal}.
The following proposition establishes a necessary and sufficient condition for continuous extension of $f$:  
\begin{proposition}\label{prop:extension}
    There exists a continuous extension $\bar{f}$ iff $\hat{f}$ is uniformly continuous with regard to the $W_1$ metric on its domain.
\end{proposition}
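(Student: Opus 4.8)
The plan is to identify $\hat{\mathcal P}(\I)$ as a $W_1$-dense subset of $\PI$ and then invoke the classical fact that a uniformly continuous map defined on a dense subset of a metric space extends uniquely and continuously to the whole space, the target $\R$ being complete. Two preliminary remarks frame the argument. First, for \emph{any} extension to exist $f$ must be consistent under duplication of points, i.e. $D(x)=D(x')\Rightarrow f(x)=f(x')$; we take this necessary condition as given, so that $\hat f(\mu)=f(x)$ for \emph{every} representative $x$ of $\mu$ (not just the minimal one), and hence any extension $\bar f$ is forced to restrict to $\hat f$ on $\hat{\mathcal P}(\I)$. Second, since $\I$ is bounded, $W_1$ metrizes the topology of weak convergence on $\PI$, and $(\PI,W_1)$ is a compact, hence complete, metric space; it is cleanest here to regard $\I$ as compact (replacing it by its topological closure if necessary, which is harmless as $f$ and its would-be extension are already pinned down by their values on empirical measures).

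The first substantive step is the density lemma: $\hat{\mathcal P}(\I)$ is $W_1$-dense in $\PI$. This is immediate from the law of large numbers for empirical measures (Varadarajan's theorem): for $\mu\in\PI$ and $X_1,X_2,\dots$ i.i.d.\ $\sim\mu$, the empirical measure $\frac1N\sum_{i=1}^N\delta_{X_i}=D(X_1,\dots,X_N)$ lies in $\hat{\mathcal P}_N(\I)$ and converges weakly to $\mu$ almost surely; boundedness of $\I$ upgrades this to $W_1$-convergence, so $\mu$ is a $W_1$-limit of points of $\hat{\mathcal P}(\I)$. With this in hand, the ($\Leftarrow$) direction is short: if $\hat f$ is uniformly continuous in the $W_1$ metric on $\hat{\mathcal P}(\I)$, the extension theorem for uniformly continuous maps into the complete space $\R$ produces a unique uniformly continuous $\bar f:\PI\to\R$; since $W_1$ metrizes weak convergence, $\bar f$ is weakly continuous, and for any $(x_1,\dots,x_N)\in\I^N$ we get $\bar f\big(\frac1N\sum_i\delta_{x_i}\big)=\hat f(D(x))=f(x_1,\dots,x_N)$ using consistency, so $\bar f$ genuinely extends $f$.

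For the ($\Rightarrow$) direction, suppose a continuous (equivalently, by boundedness of $\I$, $W_1$-continuous) extension $\bar f:\PI\to\R$ exists. Because $(\PI,W_1)$ is compact, $\bar f$ is automatically uniformly continuous on it; the restriction of a uniformly continuous function to the subset $\hat{\mathcal P}(\I)$ remains uniformly continuous, and by the consistency remark this restriction is exactly $\hat f$. Hence $\hat f$ is uniformly continuous with respect to $W_1$, completing the equivalence. (The fact that $\bar f$ automatically inherits permutation symmetry is immediate, since $D$ forgets order.)

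The potentially delicate part is not the density lemma or the extension theorem — both are standard — but the topological bookkeeping: that $W_1$ actually metrizes the weak topology on the relevant space of measures and that this space is compact/complete. This is precisely where the boundedness of $\I$ is essential; for unbounded $\I$ the statement genuinely fails, since one could then have a weakly continuous $\bar f$ that is not uniformly continuous with $\hat f$ not uniformly continuous either. So the main care in a full write-up goes into stating the setting so that these metrization and compactness facts hold, after which both implications are routine.
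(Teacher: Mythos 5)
Your proof is correct and follows essentially the same route as the paper's: the forward direction in both is restriction together with compactness of $\PI$ (Heine--Cantor), and the backward direction in both rests on density of $\hat{\mathcal{P}}(\I)$ in $(\PI,W_1)$ plus an extension step. The difference is in how the extension is produced. The paper constructs $\bar f$ explicitly as $\bar f(\mu)=\inf_{\epsilon>0}\sup_{\nu\in B_\epsilon(\mu)\cap\hat{\mathcal{P}}(\I)}\hat f(\nu)$ and then verifies well-definedness, agreement with $\hat f$, and continuity via a triangle-inequality estimate along approximating discrete sequences; you instead invoke the standard extension theorem for uniformly continuous maps from a dense subset into a complete space, which is shorter and leaves less to check. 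Two points where your write-up is arguably more careful: (i) you state the duplication-consistency condition $D(x)=D(x')\Rightarrow f(x)=f(x')$ as an explicit hypothesis. This is genuinely needed for the ``if'' direction, since uniform continuity of $\hat f$ only constrains $f$ at minimal representatives; the paper instead tries to derive it by perturbing a single coordinate of $y$ into a fresh atom (so $D(y^i)$ has minimal representative $y^i$ itself) and passing to the limit, but the final step $\lim_i f_M(y^i)=f_M(y)$ silently uses continuity of $f_M$ at $y$, which is not among the stated hypotheses. (ii) You flag that boundedness of $\I$ is what makes $W_1$ metrize weak convergence and renders $(\PI,W_1)$ compact/complete; the paper uses compactness of $\PI$ without comment.
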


This result formalises the intuition that extending a symmetric function from sets to measures requires a minimal amount of regularity \emph{across} sizes. We next show examples of symmetric families that can be extended to $\PI$. 



\subsection{Examples of Eligible Symmetric Families}\label{sec:examples}

\paragraph{Moment-based Functions:}



Functions based on finite-range interactions across input elements admit continuous extensions.  For example, a function of singleton and pairwise interactions
$$f(x) = \rho\left(\frac{1}{N} \sum_{i=1}^N \phi_1(x_i), \frac{1}{N^2}\sum_{i_1,i_2=1}^N \phi_2(x_{i_1}, x_{i_2})\right)$$ is a special case of the continuous measure extension
$\bar{f}(\mu) = \rho \left( \langle \phi_1, \mu \rangle, \langle \phi_2, \mu \otimes \mu \rangle \right)$ when $\mu = D(x)$.

\paragraph{Ranking:}

Suppose that $\I \subseteq \R$. 
The max function $f_N(x) = \max_{i\leq N} x_i$ cannot be lifted to a function on measures due to discontinuity in the weak topology.  Specifically, consider $\mu = \delta_0$ and $\nu_{N} = \frac{N-1}{N} \delta_0 + \frac{1}{N} \delta_1$.  Then $\nu_N \rightharpoonup \mu$, but for $\hat{f}$ as in Proposition \ref{prop:extension}, $\hat{f}(\nu_N) = 1 \neq 0 = \hat{f}(\mu)$.

Nevertheless, we can define an extension on a smooth approximation via the softmax, namely
$g_N^\lambda(x) = \frac{1}{\lambda} \log \frac{1}{N} \sum_{i=1}^N \exp(\lambda x_i)$.  This formulation, which is the softmax up to an additive term, can clearly be lifted to a function on measures, with the bound $\|g_N^\lambda - f_N\|_\infty \leq \frac{\log N}{\lambda}$.  Although we cannot learn the max family across all $N$, we can approximate arbitrarily well for bounded $N$.


\paragraph{Counterexamples:}

Define the map $\Delta_k: \mathbb{R}^N \rightarrow \mathbb{R}^{kN}$ such that $\Delta_k(x)$ is a vector of $k$ copies of $x$.  Then a necessary condition for the function $\hat{f}$ introduced in Proposition~\ref{prop:extension} to be uniformly continuous is that $f_N(x) = f_{kN}(\Delta_k(x))$ for any $k$.  Intuitively, if $f_N$ can distinguish the input set beyond the amount of mass on each point, it cannot be lifted to measures.  This fact implies any continuous approximation to the family $f_N(x) = x_{[2]}$, the second largest value of $x$ will incur constant error.


\section{Approximation and Function Class Separation}
\label{sec:approx}


\subsection{Approximation of single `neurons'}
In the same spirit as the ``separations'' between $\FF_1$ and $\FF_2$, we characterise prototypical functions that belong to $\S_i$ but have poor approximation rates in $\S_{i+1}$ for $i=\{1,2\}$ in terms of the relevant parameters of the problem, the input dimensionality $d$ and the bandwidth parameter $m$. 
Such functions are given by single neurons in a spherical input regime (details for this setting are given in the Appendix).

For the remainder of this work, we consider $\widetilde{\sigma} = \sigma$ as the ReLU activation, and choose $\alpha = 2$ such that $\sigma_2(t) = \sigma(t)^2$ is the squared ReLU.


\begin{theorem}[informal]\label{thm:inclusion}
    Assume $m = \infty$.  For appropriate choices of the kernel base measures $\kappa$ and $\tau$, there exist $f_1$ with $\|f_1\|_{\S_1} \leq 1$ and $f_2$ with $\|f_2\|_{\S_2} \leq 1$ such that:
    \begin{align*}
        \inf_{\|f\|_{\S_3} \leq \delta} \|f - f_2\|_\infty & \gtrsim d^{-2} \delta^{-5/d}~,\\
        \inf_{ \|f \|_{\S_2} \leq \delta} \| f - f_1 \|_\infty & \gtrsim |d^{-11} - d^{-d/3}\delta|~.
    \end{align*}
\end{theorem}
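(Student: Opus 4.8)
The plan is to prove the two separation statements by reducing each to a known separation between the "outer" function classes $\mathcal{G}_2$ and $\mathcal{G}_1$ (or between $\mathcal{F}_2$ and $\mathcal{F}_1$ for the "inner" test-function classes), exploiting the product structure of the spaces $\mathcal{S}_i = \mathcal{G}_j(\mathcal{A}_k)$. For the first inequality, I would take $f_2$ to be a single measure-neuron $f_2(\mu) = \sigma_2(\langle \phi_0, \mu\rangle)$ for a carefully chosen test function $\phi_0 \in \mathcal{A}_{2,\infty}$, so that $\|f_2\|_{\mathcal{S}_2} \le 1$ trivially (put a unit point mass at $\phi_0$). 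The key is that, once we fix attention to the one-dimensional family of measures $\mu_t$ indexed by $t = \langle \phi_0, \mu\rangle$, approximating $f_2$ in $\mathcal{S}_3$ reduces to approximating the scalar squared-ReLU $\sigma_2(t)$ within the RKHS $\mathcal{G}_2(\mathcal{A}_{2,\infty})$, whose kernel $k_\mathcal{G}(\mu,\mu') = \int \sigma_2(\langle\phi,\mu\rangle)\sigma_2(\langle\phi,\mu'\rangle)\tau(d\phi)$ restricted to this family has controllable eigenvalue decay. The eigenvalue decay is governed by the smoothness of $\sigma_2$ composed with the (fixed) spherical measure $\tau$ over test functions, mirroring the classical computation for ReLU-power kernels on the sphere where eigenvalues decay polynomially with exponent tied to $d$; pushing the standard lower bound on RKHS approximation of a non-smooth target with slowly-decaying spectral coefficients then yields the rate $d^{-2}\delta^{-5/d}$. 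Concretely I would: (i) pick the spherical setting from the Appendix so that $\mathcal{A}_{2,\infty}$ and $\tau$ are rotation-invariant; (ii) compute the Mercer decomposition of $k_\mathcal{G}$ along the orbit of $\mu$ under rotations, identifying spherical-harmonic-type eigenfunctions and their eigenvalues $\lambda_k \sim k^{-\beta(d)}$; (iii) express $f_2$ in this basis and show its coefficients decay too slowly to lie in the $\delta$-ball of the RKHS; (iv) convert the resulting $L_2$ lower bound into an $L_\infty$ one (trivial, since $\|\cdot\|_\infty \ge \|\cdot\|_{L_2}$ up to the measure's mass).

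For the second inequality, the target $f_1$ should encode a test function $\phi_1$ that is cheap in $\mathcal{F}_1$ but expensive in $\mathcal{F}_2$ — i.e. $\phi_1$ should have hidden low-dimensional (ridge) structure, so that $\gamma_1(\phi_1)$ is $O(1)$ while $\gamma_2$ of any good approximant blows up with $d$. This is exactly the kind of function for which \textcite{bach2017breaking} proves an $\mathcal{F}_1$ vs. $\mathcal{F}_2$ separation. I would set $f_1(\mu) = \sigma_2(\langle \phi_1, \mu\rangle)$ with $\|f_1\|_{\mathcal{S}_1}\le 1$, and then argue that any $f \in \mathcal{S}_2$ with $\|f\|_{\mathcal{S}_2}\le\delta$ must, in order to reproduce the nonlinear dependence of $f_1$ on $\langle\phi_1,\mu\rangle$, effectively use test functions $\phi$ close to $\phi_1$ (up to the action of $\sigma_2$), but such $\phi$ cannot be represented in $\mathcal{A}_{2,\infty}$ without large $\gamma_2$ norm — contradicting $\phi \in \mathcal{A}_{2,\infty}$ unless $\delta$ is large. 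The quantitative extraction of $\phi_1$ from $f$ is delicate: I would restrict to a low-dimensional slice of measures $\mu$ supported on a sphere in the ridge directions, on which $f_1$ acts as a fixed nonlinear function while the "orthogonal" test-function content of $f$ averages out, then invoke the scalar $\mathcal{F}_1$/$\mathcal{F}_2$ lower bound. The two competing terms $d^{-11}$ and $d^{-d/3}\delta$ reflect, respectively, a fixed approximation floor coming from the intrinsic $\mathcal{F}_1$-norm of the target and the rate at which increasing $\delta$ can buy down the error before the RKHS-ball constraint on $\mathcal{A}_{2,\infty}$ becomes binding.

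The main obstacle, in both parts, is the decoupling of the two layers of non-linearity: the outer activation $\widetilde\sigma$ acting on the measure pairing, and the inner structure of the test functions $\phi$. The clean separation results of \textcite{bach2017breaking} are for a single layer, so I cannot cite them black-box; I need a "restriction" or "marginalization" argument that isolates one layer at a time. I expect the cleanest route is to choose the data measures $\mu$ to live on a low-dimensional sub-family (e.g. $\mu = $ pushforward of a fixed reference measure under a rotation or translation in $\RR^d$) parameterized by a few angles, so that along this sub-family $f_1$ or $f_2$ becomes a scalar or low-dimensional function whose approximation complexity in $\mathcal{S}_{i+1}$ is controlled by the already-understood scalar RKHS/variation-norm theory, while the embedding cost of the relevant test functions into $\mathcal{A}_{i+1}$ supplies the dimension-dependent factors. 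Carefully tracking how $\delta$ propagates through this restriction — in particular that restricting to the sub-family does not artificially inflate the norm of the approximant — is where the $\delta$-dependence and the precise exponents ($5/d$, $d/3$, $11$) will be pinned down, and is the part of the argument I would expect to require the most care.
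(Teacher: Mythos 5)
Your high-level plan---reduce each separation to a known single-layer result by restricting to a sub-family of measures along which one layer of nonlinearity becomes inert---is the right instinct for the $\mathcal{S}_2$/$\mathcal{S}_3$ part, but diverges substantially from the paper on the $\mathcal{S}_1$/$\mathcal{S}_2$ part, where there is a real gap.

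For $\mathcal{S}_2$ vs.\ $\mathcal{S}_3$: you first say to restrict to a one-dimensional family $\mu_t$ indexed by $t = \langle\phi_0,\mu\rangle$, and later say to sweep the orbit of $\mu$ under rotations and do a Mercer decomposition. These are not the same thing, and the first one does not work: a genuinely one-parameter restriction cannot produce a $d$-dependent exponent like $\delta^{-5/d}$. What the paper actually does is choose $f_2(\mu)=\sigma(\langle g,\mu\rangle)$ with $g=\lambda_1 Y_{1,1}$ a degree-one harmonic, and evaluate on a $d$-dimensional family $\mu_\beta^*$, $\beta\in\mathbb{S}^d$, constructed so that $\langle g,\mu_\beta^*\rangle\propto\langle e_1,\beta\rangle$. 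With a product structure imposed on $\tau$ (independence across harmonic degrees), the $\mathcal{S}_3$ evaluation marginalizes cleanly to an $\mathcal{F}_2$ function of $\beta$ on the sphere, and the paper invokes Bach's Lemma D.5 as a black box rather than redoing the Mercer/eigenvalue computation. So the ``orbit'' version of your idea is right in spirit, but you should commit to the $d$-sphere parameterization and note that the marginalization over the test-function measure $\tau$ is where the reduction actually happens; the 1D framing is a red herring.

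For $\mathcal{S}_1$ vs.\ $\mathcal{S}_2$, your argument has a genuine hole. You propose to show that any $f\in\mathcal{S}_2$ with small norm ``must effectively use test functions $\phi$ close to $\phi_1$,'' but nothing forces this: $\chi$ may spread mass over infinitely many test functions in $\mathcal{A}_2$, none individually close to $\phi_1$, and a slice/marginalization argument of the type you sketch does not obviously bound the contribution of that spread. The paper avoids this entirely by a completely different device: starting from $g(x)=\sigma(\langle w,x\rangle)^2$, remove low harmonics to form $\tilde g$ and the odd function $h$, and build the \emph{signed} measure $\nu\propto\tilde g\,\kappa$. Because $\tilde g$ is odd (this is exactly why the test-function activation is chosen to be the squared ReLU, so that even harmonics of degree $\ge 4$ vanish), the Jordan decomposition $\nu=\nu^+-\nu^-$ produces two \emph{probability} measures. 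Evaluating at just these two points and using Lipschitzness of the outer ReLU reduces $|f(\nu^+)-f(\nu^-)|$ to $\sup_{\gamma_2(\phi)\le 1}|\langle\phi,\tilde g\rangle|\cdot\|f\|_{\mathcal{S}_2}$, which is bounded via a spherical-harmonic correlation upper bound; the matching lower bound on $|f_1(\nu^+)-f_1(\nu^-)|$ comes from a correlation lower bound $\langle g,\tilde g\rangle\gtrsim d^{-21/2}$. This parity/Jordan-decomposition construction is the crux of the proof and is absent from your plan; without it, the ``two competing terms'' $d^{-11}$ and $d^{-d/3}\delta$ you describe remain unmotivated, as they arise precisely from these two correlation bounds rather than from a generic approximation floor and spectral rate.
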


The choice of the squared ReLU activation in the parameterization of the test functions is required in the proof separating $\S_1$ and $\S_2$.  This follows from some properties of spherical harmonic parity and the decomposition of signed measures into probability measures.

These separations use the infinity norm rather than an appropriate $L_2$ norm, and therefore hold in a weaker norm than separation between $\FF_1$ and $\FF_2$.  Nevertheless, these separations confirm that symmetric network expressiveness is \emph{graded} by the degree of non-linear learning.

Both results hold in the domain $m = \infty$, so from the concentration of the empirical kernel $k_m \rightarrow k$, with high probability these approximation lower bounds will still hold for sufficiently large $m$.  In finite-width implementations, however, $m$ may be sufficiently small that the random kernel more explicitly determines the expressiveness of $\S_{i,m}$.  We experimentally test the presence of these depth separations with finite $m$ in Section~\ref{sec:experiments}. 

\subsection{Approximation of variational symmetric function via Laplace method}

Consider any symmetric family $f_N(x) = \argmin_{t \in T} \langle \hat\mu_x, \phi_t \rangle$ where $\hat\mu_x$ is the empirical measure of $x$, ie, $\hat\mu_x = \frac{1}{N} \sum_i \delta_{x_i}$, $T$ is a Euclidean subset, and $t \mapsto \phi_t$ is measurable.  For example $T = \mathbb{R}$ and $\phi_t(x) = |t-x|$ yields $f_N$ as the median.

Although this function family isn't necessarily uniformly continuous in the weak topology, we highlight the option of a Laplace approximation.  Define $E_\mu(t) := \langle \mu, \phi_t \rangle $ and introduce the density $p_\beta(t) = \frac{1}{Z} e^{-\beta E_\mu(t)}$ where $Z = \int_T e^{-\beta E_\mu(t)} dt$ is the partition function.  Then consider the Gibbs approximation $g_\beta(\mu) := \E_{p_\beta}[t] = \frac{1}{Z} \int_T t e^{-\beta E_\mu(t)} dt$.


One can verify (e.g. \parencite{raginsky2017non}) that $g_\beta \to g$ pointwise at a rate $\sim  \frac{d \log(\beta + 1)}{\beta}$. As $g_\beta$ is continuous, by universality it can be represented in $\S_i$ for all $i=\{1,2,3\}$. 
An approximation of $g_\beta$ is given as a ratio of two shallow networks $g_\beta(\mu) = \frac{\int_T t \sigma_1(\langle \mu, \phi_t \rangle ) dt }{\int_T \sigma_1(\langle \mu, \phi_t \rangle ) dt}$,
with $\sigma_1(u) = e^{-\beta u}$. However, the approximation rates blow-up as $\beta \to \infty$ with an exponential dependency on the dimension of $T$. 

\section{Generalization and Concentration}
\label{sec:gen}


\subsection{Generalization Bounds}

Despite being a larger function class than $\FF_2$, the class $\FF_1$ enjoys a nice generalization bound \parencite{bach2017breaking}.  Crucially, this property is inherited when we lift to functions on measures, controlling the generalization of functions in $\S_1$:
\begin{proposition}\label{prop:rad}
    Assume for given $\delta$, for all $y$ the loss function $\ell(y, \cdot)$ is $G$-Lipschitz on $B_0(2R^2\delta)$, and $l(y, 0) \leq RG\delta$.  Then with probability at least $1-t$,

    $$\sup_{\|f\|_{\mathcal{S}_1} \leq \delta} \left| \E_{\mu \sim \mathcal{D}} \ell(f^*(\mu), f(\mu)) - \frac{1}{n} \sum_{i=1}^n \ell(f^*(\mu_i), f(\mu_i)) \right| \\
    \leq \frac{2RG\delta + 16R^4G\delta}{\sqrt{n}} + (4R^2G\delta + 2RG\delta)\sqrt{\frac{\log 1/t}{2n}}~.$$
\end{proposition}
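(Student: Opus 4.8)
The plan is to bound the uniform deviation by the classical recipe: a bounded-differences inequality for the high-probability fluctuation term, then symmetrization plus contraction to reduce the \emph{expected} deviation to a Rademacher complexity, which we control by unfolding the two nested variation-norm balls defining $\S_1$. The groundwork is a pair of range estimates. Since $\sup_{x\in\I}\|x\|_2\le R$ and $\tilde x=[x,R]^\top$, we have $\|\tilde x\|_2\le\sqrt2\,R$, so for every $\phi\in\A_1$ (the unit ball of $\FF_1$ with its squared-ReLU parameterization) and every $\mu\in\PI$, $|\langle\phi,\mu\rangle|\le\|\phi\|_\infty\le\sup_{\|w\|_2\le1,\,x}\sigma_2(\langle w,\tilde x\rangle)\le 2R^2$; consequently $|f(\mu)|\le 2R^2\|f\|_{\S_1}\le 2R^2\delta$ for every $f$ in the $\delta$-ball, so each evaluation $f(\mu_i)$ lies in $B_0(2R^2\delta)$, where $\ell(y,\cdot)$ is assumed $G$-Lipschitz, and hence $|\ell(y,f(\mu))|\le|\ell(y,0)|+G|f(\mu)|\le RG\delta+2R^2G\delta$.

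For the tail term, I would apply McDiarmid's bounded-differences inequality to $\Psi(\mu_1,\dots,\mu_n):=\sup_{\|f\|_{\S_1}\le\delta}|\E_{\mu\sim\mathcal D}\ell(f^*(\mu),f(\mu))-\tfrac1n\sum_i\ell(f^*(\mu_i),f(\mu_i))|$, viewed as a function of the i.i.d.\ sample. Swapping one $\mu_i$ alters a single summand, hence changes $\Psi$ by at most $\tfrac2n(RG\delta+2R^2G\delta)$ by the bound just derived, so McDiarmid gives, with probability at least $1-t$, $\Psi\le\E\Psi+(2RG\delta+4R^2G\delta)\sqrt{\log(1/t)/(2n)}$, which is exactly the second term in the statement. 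It remains to bound $\E\Psi$. Standard symmetrization gives $\E\Psi\le 2\,\E_{\mu,\epsilon}\sup_{\|f\|_{\S_1}\le\delta}\tfrac1n\sum_i\epsilon_i\ell(f^*(\mu_i),f(\mu_i))$; writing $\ell(f^*(\mu_i),f(\mu_i))=[\ell(f^*(\mu_i),f(\mu_i))-\ell(f^*(\mu_i),0)]+\ell(f^*(\mu_i),0)$, the second pieces are independent of $f$ and average to zero over $\epsilon$, while the bracketed pieces are $G$-Lipschitz in $f(\mu_i)\in B_0(2R^2\delta)$ and vanish at $f(\mu_i)=0$, so the contraction lemma yields $\E\Psi\le 2G\,\mathcal R_n(\mathcal F_\delta)$, where $\mathcal F_\delta:=\{f\in\S_1:\|f\|_{\S_1}\le\delta\}$ and $\mathcal R_n$ is the Rademacher complexity.

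The heart of the proof is a bound of the form $\mathcal R_n(\mathcal F_\delta)\lesssim R^4\delta/\sqrt n$, obtained by peeling the two variation-norm layers. Every $f\in\mathcal F_\delta$ is $f(\mu)=\int_{\A_1}\sigma(\langle\phi,\mu\rangle)\,\chi(d\phi)$ with $\|\chi\|_{\mathrm{TV}}\le\delta$; a TV-ball of signed measures is the $\delta$-scaled symmetric convex hull of point masses, and Rademacher complexity is unchanged by passing to symmetric convex hulls, so $\mathcal R_n(\mathcal F_\delta)=\delta\,\mathcal R_n(\{\mu\mapsto\sigma(\langle\phi,\mu\rangle):\phi\in\A_1\})$, and contracting the $1$-Lipschitz outer ReLU (with $\sigma(0)=0$) strips it off. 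Next, each $\phi\in\A_1$ is $\phi=\int\sigma_2(\langle w,\tilde\cdot\rangle)\,\nu(dw)$ with $\|\nu\|_{\mathrm{TV}}\le1$, so $\mu\mapsto\langle\phi,\mu\rangle$ lies in the symmetric convex hull of the single-neuron class $\{\mu\mapsto\langle\sigma_2(\langle w,\tilde\cdot\rangle),\mu\rangle:\|w\|_2\le1\}$, and it suffices to bound the Rademacher complexity of the latter. Because the squared ReLU acts \emph{before} the integration against $\mu$, I would use convexity of the supremum to pass from $\E_{x\sim\mu_i}[\cdot]$ to a representative draw $x_i\sim\mu_i$, reducing to the Rademacher complexity of $\{x\mapsto\sigma_2(\langle w,\tilde x\rangle):\|w\|_2\le1\}$ at points with $\|\tilde x_i\|_2\le\sqrt2\,R$; then contract $\sigma_2$, which is $2\sqrt2\,R$-Lipschitz on $[-\sqrt2\,R,\sqrt2\,R]$ and vanishes at $0$; and finally use the textbook estimate $\tfrac1n\E_\epsilon\|\sum_i\epsilon_i\tilde x_i\|_2\le\tfrac1n(\sum_i\|\tilde x_i\|_2^2)^{1/2}\le\sqrt2\,R/\sqrt n$ for linear functionals over the unit ball. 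Chaining these steps, and bookkeeping the TV-ball radii and the symmetrization and contraction factors, yields the first term $(2RG\delta+16R^4G\delta)/\sqrt n$.

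The step I expect to be the main obstacle is this last one, where the squared-ReLU nonlinearity sits \emph{inside} the average against the input measure and so blocks a direct use of the scalar contraction lemma. The remedy is the convexity-of-supremum step above, equivalently the integral representation $\sigma_2(u)=\int_0^{\sqrt2 R}2\,\sigma_1(u-s)\,ds$ expressing one squared-ReLU neuron as a bounded-total-variation mixture of plain ReLU neurons with bounded-norm weights; this is precisely what keeps the resulting bound free of the latent dimension $d$ and of the test-function width $m$, so the statement holds verbatim for all $m$ (including $m=\infty$), whereas a coordinatewise vector contraction would introduce a spurious $\sqrt d$. The remaining points, namely measurability of $\A$, attainment of the suprema defining the variation norms, and the factor-$2$ bookkeeping, are routine.
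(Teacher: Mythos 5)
Your overall route is the same as the paper's: McDiarmid for the high-probability tail, symmetrization plus Lipschitz contraction on the loss to pass to a Rademacher complexity, then two successive peelings of the TV-norm balls (extreme points strip the outer $\chi$-integral down to a single $\phi$, and the inner $\nu$-integral down to a single neuron $w$), a Jensen step to exchange the supremum with the expectation over $x\sim\mu_i$, contraction on the squared ReLU with Lipschitz constant $2\sqrt2R$, and the linear-functional estimate $\frac1n\E\|\sum_i\epsilon_i\tilde x_i\|_2\le\sqrt2R/\sqrt n$. Your side remark, replacing the final $\sigma_2$-contraction by the explicit mixture $\sigma_2(u)=\int_0^{\sqrt2R}2\sigma_1(u-s)\,ds$ on the relevant range, is a valid equivalent (the mixture has total variation $2\sqrt2R$, matching the contraction constant) and is a clean way to explain why no dimension factor appears.

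There is one genuine, though repairable, gap in your symmetrization step. Because $\Psi$ is defined with an absolute value, the correct symmetrization bound is $\E\Psi \le 2\,\E\sup_f\bigl|\frac1n\sum_i\epsilon_i\,\ell(f^*(\mu_i),f(\mu_i))\bigr|$, with the absolute value retained on the right. You drop it, and this is what lets you claim that the $f$-independent pieces $\ell(f^*(\mu_i),0)$ "average to zero over $\epsilon$." That cancellation fails once the absolute value is in place: for a fixed bounded vector $(c_i)$, $\E_\epsilon\bigl|\frac1n\sum_i\epsilon_i c_i\bigr|$ is of order $\|c\|_2/n$, not zero. This is precisely where the paper's first summand $\tfrac{2RG\delta}{\sqrt n}$ comes from — they bound $\E\sup\bigl|\frac1n\sum_i\epsilon_i\ell(f^*(\mu_i),0)\bigr|$ directly via the hypothesis $|\ell(y,0)|\le RG\delta$ and a Khintchine-type bound. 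Your proposal would actually lose this term, so the mechanism you describe cannot by itself reproduce the stated constant $(2RG\delta+16R^4G\delta)/\sqrt n$. The fix is standard: keep the absolute value through symmetrization, split by the triangle inequality, and bound the baseline term separately as the paper does; the rest of your argument then goes through unchanged.
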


This proposition demonstrates that learning in $\mathcal{S}_1$ is not cursed by the dimension of the underlying input space $\I$.  In other words, the main price for learning in $\S_1$ is not in generalization, despite the size of this class relative to $\S_2$ and $\S_3$.  In the absence of a lower bound on generalization error for the RKHS function classes, our experiments investigate the generalization of these models in practice.

Although $d$ and $N$ do not appear in this bound, these parameters nevertheless impact the generalization of our function classes $\S_i$.  The input dimension controls the separation of the classes according to Theorem~\ref{thm:inclusion}, and therefore larger $d$ weakens the generalization of $\S_2$ and $\S_3$; compare Figure~\ref{fig:plots} and Figure~\ref{fig:plots_extra} (in the Appendix) for how RKHS methods suffer in higher dimensions.  Whereas large $N$ and a natural choice of $\mathcal{D}$ make generalization for $\S_1$, and hence all three classes, nearly trivial, as discussed in section~\ref{sec:conc}.

\subsection{Concentration across Input Size}\label{sec:conc}


Consider the data distribution from which we sample, namely a measure from $ \mathcal{P}\left(\bar{\I} \right)$ to sample finite sets.  A natural way to draw data is to consider the following sampling procedure: given $\xi \in \mathcal{P}(\mathcal{P}(\I))$ and $\Omega \in \mathcal{P}(\mathbb{N})$, draw $\mu \sim \xi$ and $N \sim \Omega$, sample $N$ independent points $x_i \sim \mu$, and return $\{x_1, \dots, x_N\}$.  If $\xi$ is too peaked, this sampling process will concentrate very rapidly:

\begin{proposition}\label{prop:concentration}
    For $\xi = \delta_{\mu^*}$, then $\E \sup_{\|f\|_{\mathcal{S}_1} \leq \delta} \left|\frac{1}{n} \sum_{i=1}^n \epsilon_i f(\mu_i) \right| \lesssim \delta R^2( n^{-1/2} + \E_{N \sim \Omega}[N^{-1/d}])$.
\end{proposition}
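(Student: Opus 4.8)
The plan is to exploit the very special structure of the sampling model: since $\xi = \delta_{\mu^*}$, every drawn measure $\mu_i$ is the empirical measure of $N_i$ i.i.d.\ samples from the \emph{same} population measure $\mu^*$, so $\mu_i$ is $W_1$-close to $\mu^*$ in expectation, and functions in the unit ball of $\S_1$ are uniformly Lipschitz with respect to $W_1$. Concretely, I would first record two elementary consequences of $\|f\|_{\S_1}\le\delta$, i.e.\ of writing $f(\mu)=\int_{\A_1}\widetilde\sigma(\langle\phi,\mu\rangle)\,\chi(d\phi)$ with $\|\chi\|_{\mathrm{TV}}\le\delta$. First, boundedness: any test function $\phi\in\A_1$ obeys $|\phi(x)|\le\|\tilde x\|_2^2\le 2R^2$ (from $\sigma_2(\langle w,\tilde x\rangle)\le\|w\|_2^2\|\tilde x\|_2^2$ with $\|w\|_2\le1$ and the unit constraint on the outer coefficients), hence $|\langle\phi,\mu\rangle|\le 2R^2$ for any probability measure $\mu$, and so $|f(\mu)|\le 2R^2\delta$ for every $\mu$; in particular $|f(\mu^*)|\le 2R^2\delta$. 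Second, Lipschitzness: any $\phi\in\A_1$ has $\|\nabla_x\phi(x)\|_2\le 2\|\tilde x\|_2\le 2\sqrt2\,R$, so $|\langle\phi,\mu\rangle-\langle\phi,\mu'\rangle|\le 2\sqrt2\,R\,W_1(\mu,\mu')$ by Kantorovich--Rubinstein duality; combining this with the $1$-Lipschitzness of the ReLU $\widetilde\sigma$ and $\|\chi\|_{\mathrm{TV}}\le\delta$ yields $|f(\mu)-f(\mu')|\le 2\sqrt2\,R\delta\,W_1(\mu,\mu')$, i.e.\ a uniform Lipschitz constant $L_\delta\lesssim R\delta$ over the whole ball.

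Next I would decompose the Rademacher average by inserting $\pm f(\mu^*)$:
$$\Big|\tfrac1n\textstyle\sum_i\epsilon_i f(\mu_i)\Big|\ \le\ \Big|\tfrac1n\textstyle\sum_i\epsilon_i\big(f(\mu_i)-f(\mu^*)\big)\Big|\ +\ |f(\mu^*)|\,\Big|\tfrac1n\textstyle\sum_i\epsilon_i\Big|.$$
For the second term, $|f(\mu^*)|\le 2R^2\delta$ uniformly over the ball, and $\E\big|\tfrac1n\sum_i\epsilon_i\big|\le n^{-1/2}$ by Cauchy--Schwarz, giving a contribution $\lesssim R^2\delta\,n^{-1/2}$. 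For the first term I would discard the Rademacher signs by the triangle inequality, $\big|\tfrac1n\sum_i\epsilon_i(f(\mu_i)-f(\mu^*))\big|\le\tfrac1n\sum_i|f(\mu_i)-f(\mu^*)|\le L_\delta\,\tfrac1n\sum_i W_1(\mu_i,\mu^*)$, an upper bound that no longer depends on $f$, so the supremum over the ball is taken for free.

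Finally I would take expectations over the data. Conditioning on the sizes $N_i$ and invoking the standard convergence rate of empirical measures in Wasserstein distance on the bounded domain $\I$ (Dudley; Fournier--Guillin), $\E\big[W_1(\mu_i,\mu^*)\mid N_i\big]\lesssim R\,N_i^{-1/d}$, hence $\E\big[\tfrac1n\sum_i W_1(\mu_i,\mu^*)\big]\lesssim R\,\E_{N\sim\Omega}[N^{-1/d}]$. Multiplying by $L_\delta\lesssim R\delta$ and adding the two pieces gives $\E\sup_{\|f\|_{\S_1}\le\delta}\big|\tfrac1n\sum_i\epsilon_i f(\mu_i)\big|\lesssim R^2\delta\big(n^{-1/2}+\E_{N\sim\Omega}[N^{-1/d}]\big)$, as claimed.

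The main technical point is the uniform Lipschitz control over the $\S_1$-ball: it is what allows trading one factor of $R$ against the $W_1$-concentration rate, and it is the only place where the ambient dimension $d$ enters, namely through the $N^{-1/d}$ empirical-measure rate. The only care needed is in citing that rate, since in low dimension ($d\le 2$) it carries harmless logarithmic corrections; the boundedness and Lipschitz estimates for $\A_1$, and the Khintchine-type bound for the stray Rademacher sum, are routine.
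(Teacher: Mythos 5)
Your proof is correct and follows essentially the same route as the paper's: split $\mu_i = \mu^* + (\mu_i - \mu^*)$, bound the constant part by a Khintchine-type estimate using $\sup|\langle\phi,\mu^*\rangle|\le 2R^2$, and bound the fluctuation part via the $W_1$-Lipschitzness of the test functions together with the Fournier--Guillin rate $\E[W_1(\mu_i,\mu^*)\mid N_i]\lesssim N_i^{-1/d}$. The only cosmetic difference is that you establish uniform boundedness and $W_1$-Lipschitzness directly at the level of $f$ in the $\S_1$-ball, whereas the paper first reduces via the contraction steps of Proposition~\ref{prop:rad} to a supremum over test functions $\phi$ and then performs the identical decomposition there.
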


Hence, the question of generalization across differently sized sets becomes trivial if $N$ is large and $d$ is small.  In our experiments, $N \approx d$, so we will nevertheless choose $\xi = \delta_{\mu}$ for some $\mu \in \mathcal{P}(\I)$.  We consider more exotic data distributions over measures in the experiments on robust mean estimation in Section~\ref{sec:robust_mean}.


\section{Experiments}\label{sec:experiments}

\begin{figure*}[ht]
\centering

\begin{subfigure}{.33\textwidth}
  \centering
  \includegraphics[width=1.\linewidth]{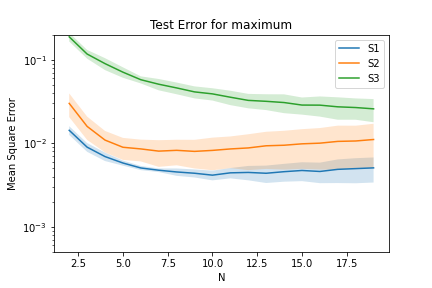}
\end{subfigure}%
\begin{subfigure}{.33\textwidth}
  \centering
  \includegraphics[width=1.\linewidth]{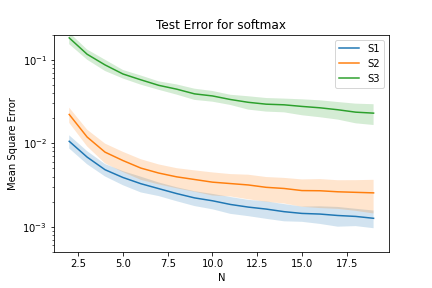}
\end{subfigure}%
\begin{subfigure}{.33\textwidth}
  \centering
  \includegraphics[width=1.\linewidth]{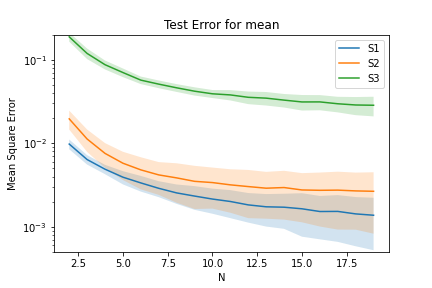}
\end{subfigure}%

\begin{subfigure}{.33\textwidth}
  \centering
  \includegraphics[width=1.\linewidth]{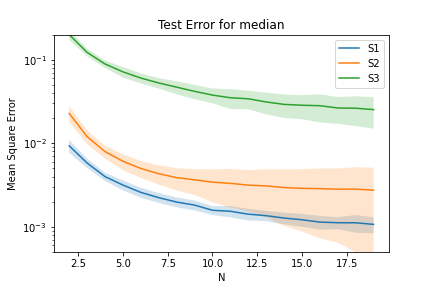}
\end{subfigure}%
\begin{subfigure}{.33\textwidth}
  \centering
  \includegraphics[width=1.\linewidth]{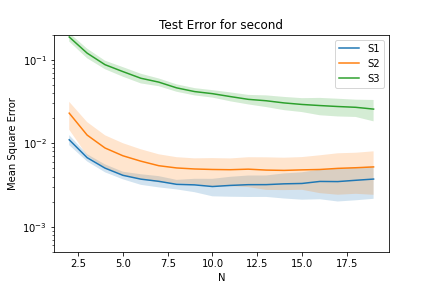}
\end{subfigure}%
\begin{subfigure}{.33\textwidth}
  \centering
  \includegraphics[width=1.\linewidth]{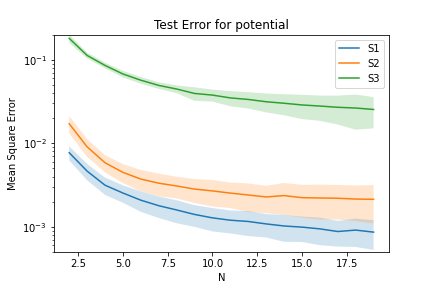}
\end{subfigure}%

\caption{\small{Test Error for $d = 10$ on the neural architectures of Section \ref{sec:neural_functional_spaces}}}
\vspace{-0.4cm}
\label{fig:plots}
\end{figure*}


\subsection{Symmetric Function Approximation}

We consider the task of learning several common symmetric functions (see Figure~\ref{fig:plots}).  Our aim is to practically understand the approximation bounds of Theorem~\ref{thm:inclusion}, as well as the generalization result of Proposition~\ref{prop:rad}.  Furthermore, by training and testing on sets of different sizes, we may consider how the models perform on out-of-distribution generalization across input size.

\paragraph{Experimental Setup:} We instantiate our three function classes in the finite network setting, as outlined in Table~\ref{tab:freeze}.  We use input dimension $d = 10$.  For the finite realization of $\S_1$, we use first hidden layer size $m = 100$ and second hidden layer size $h = 100$.  Crucially, after fixing the finite architecture representing $\S_1$, we scale up the width by 10 for the models with frozen weights.  That is, the first hidden layer in $\S_2$, and both hidden layers in $\S_3$, have width equal to 1000.  Increasing the width makes the $\S_2$ and $\S_3$ models strictly more powerful, and this setup allows us to inspect whether a larger number of random kernel features can compensate for a smaller, trained weight in approximation.  For each model, we use its associated functional norm for regularization.

Each network is trained on a batch of 100 input sets.  For our data distribution we consider the base domain $\I = [-3, 3]^d$, and the distribution over input measures $\xi$ places all its mass on the uniform measure $U([-3,3]^d)$.  We choose to train with $N = 4$, i.e. all networks train on input sets of size 4, and test on sets of varying size.  From the results we can measure out-of-distribution generalization of finite sets.

The one-dimensional symmetric functions are defined on sets of vectors by first applying inverse norms, i.e. $f_N(x) = \max_{1\leq i \leq N} \|x_i\|_2^{-1}$.  The potential function calculates the normalized gravitational potential among equal masses, i.e. $f_N(x) = \frac{2}{N(N-1)} \sum_{i < j} \frac{1}{\|x_i - x_j\|_2}$.  The planted neuron and smooth neuron are given as single-neuron networks, where following from the proof of Theorem~\ref{thm:inclusion}, the planted neuron weight initialization is distinct from the model weight initialization. Further details are given in the Appendix.

We additionally consider an applied experiment on a variant of MNIST to observe how the finite-width implementations perform on real-world data, by first mapping images to point clouds.  Due to space limitations, details and results are given in the Appendix.

\begin{figure*}[h]
\centering

\begin{subfigure}{.32\textwidth}
  \centering
  \includegraphics[width=1.\linewidth]{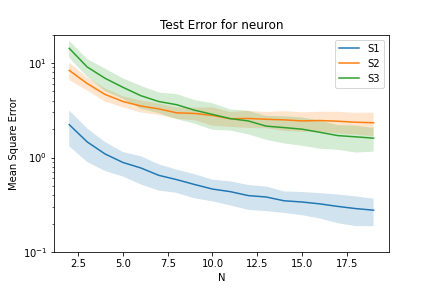}
\end{subfigure}%
\begin{subfigure}{.32\textwidth}
  \centering
  \includegraphics[width=1.\linewidth]{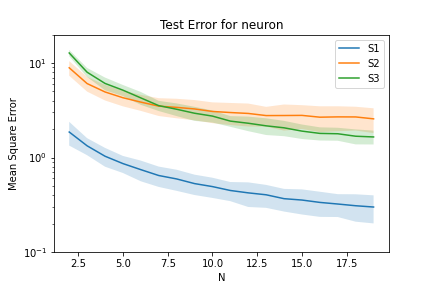}
\end{subfigure}%

\begin{subfigure}{.32\textwidth}
  \centering
  \includegraphics[width=1.\linewidth]{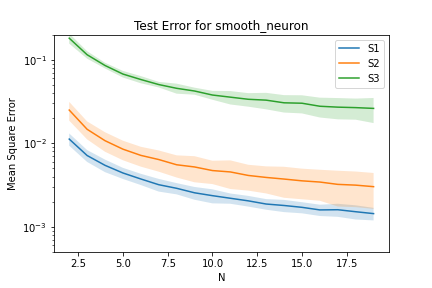}
\end{subfigure}%
\begin{subfigure}{.32\textwidth}
  \centering
  \includegraphics[width=1.\linewidth]{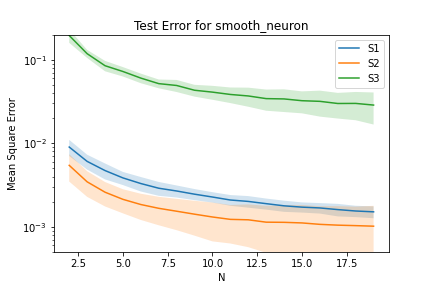}
\end{subfigure}%
\caption{\small{Planted neurons for $m = 100$ (left two) and $m = 200$ (right two).  The smooth neuron has weights sampled consistently with $\mathcal{F}_2$ while the regular neuron has weights sampled distinctly from the network initialization.}}
\label{fig:neurons}
\end{figure*}


\paragraph{Discussion:}
We observe in Figure~\ref{fig:plots} that $\mathcal{S}_3$ performs substantially worse in several cases, consistent with this function class being the smallest of those considered.  The classes $\mathcal{S}_2$ and $\mathcal{S}_1$ are competitive for some functions, although we observe a trend where $\S_1$ still has better generalization performance.  Therefore, the larger number of random kernel features doesn't compensate for training a smaller weight matrix in $\S_1$, empirically confirming Theorem~\ref{thm:inclusion}.

The test error on sets of larger size than the training data corroborates the conclusion of Proposition~\ref{prop:extension}.  The second-largest-element function generalizes extremely poorly, consistent with the observation in Section~\ref{sec:examples} that this function family cannot be approximated without constant error.  In particular, all function classes more effectively generalize across different $N$ on the softmax than the max, seeing as the latter lacks uniform continuity in measure space.

The other essential takeaway is the performance of the three models on the planted neurons in Figure~\ref{fig:neurons}.  By using a distinct weight initialization for the neuron, its first layer will have very little mass under $\kappa$, and its first two layers will have little mass under $\tau$, and therefore random features will not suffice to approximate this neuron.  This is true even with the scaling of $\S_2$ and $\S_3$ to enable more random kernel features, reiterating that these single neuron functions realize a meaningful separation between the classes.  We observe a more similar performance of $\S_1$ and $\S_2$ on the smooth\_neuron, as this function is chosen to be exactly representable with the random kernel features sampled by $\S_2$.  According to the function class inclusion it is still representable by $\S_1$, but from Theorem~\ref{thm:inclusion} not efficiently representable by $\S_3$, which is consistent with the results.

\begin{wrapfigure}{r}{0.5\textwidth}
    \includegraphics[width=0.9\linewidth]{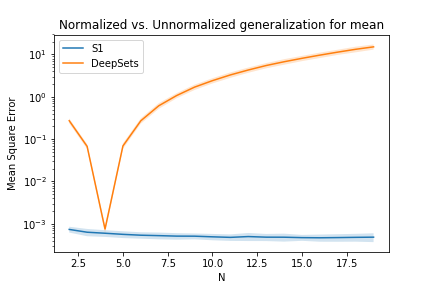}
    \caption{Test error for $\mathcal{S}_1$ versus unnormalized DeepSets architecture.}
    \label{fig:deepsets}
\end{wrapfigure}


On increasing $m$, the standard deviations of $\S_2$ and $\S_3$ shrink with more random kernel features, but $\mathcal{S}_1$ still achieves the best approximation on the neuron.  For the smooth neuron, $\S_1$ and $\mathcal{S}_2$ perform comparably, but $\S_3$ performs worse even for larger $m$.   In Figure~\ref{fig:deepsets} we confirm the need for taking averages rather than sums in the DeepSets architecture, as the unnormalized model cannot generalize outside of the value of $N = 4$ where it was trained.

\subsection{Robust Mean Estimation}\label{sec:robust_mean}

Symmetric functions naturally arise in the context of empirical estimators.  We consider specifically the task of robust mean estimation~\cite{diakonikolas2017being}, where one seeks to estimate $\E_{X \sim P}[X]$ given samples drawn from the mixture distribution $(1-\epsilon) P + \epsilon Q$.  For simplicity, we consider an oblivious contamination model where the true distribution $P$ and the noise distribution $Q$ have similar mean vectors.  Explicitly, each input set is derived as follows: we sample $m \sim \mathcal{N}(0, \sigma_m^2 I)$, $m' \sim \mathcal{N}(m, \sigma_{m'}^2 I)$, and define $P = \mathcal{N}(m, \sigma_P^2 I)$ and $Q = \mathcal{N}(m', \sigma_Q^2 I)$.  Then each input sets consist of $N$ samples $(X_1, \dots, X_N)$ where $X_i \overset{iid}{\sim} (1-\epsilon) P + \epsilon Q$.  Note that each input set is a corrupted sample with a different true mean vector $m$.

\begin{table*}[ht!]
    \centering
    \begin{tabular}{p{5pt}c|c|c|c|c}
         & &  $N = 10$ &  $\color{blue}N = 20$ & $N = 30$ & $N = 40$   \\
         \hline
    & $\mathcal{S}_1$ & $0.335 \pm 0.153$ & $\color{blue}0.131 \pm 0.018$ & $0.091 \pm 0.011$ & $0.076 \pm 0.011$ \\
     & $\mathcal{S}_2$  &  $0.342 \pm 0.153$ & $\color{blue}0.137 \pm 0.019$ & $0.098 \pm 0.012$ & $0.082 \pm 0.011$ \\
      & $\mathcal{S}_3$  & $0.361 \pm 0.162$ & $\color{blue}0.144 \pm 0.020$ & $0.103 \pm 0.013$ & $0.087 \pm 0.013$ \\
      \hline
      \hline
    & Sample Mean  & $0.385 \pm 0.172$ & $\color{blue}0.153 \pm 0.068$ & $0.093 \pm 0.042$ & $0.096 \pm 0.043$\\
     & Geometric Median  &  $0.321 \pm 0.144$ & $\color{blue}0.138 \pm 0.062$ & $0.087 \pm 0.039$ & $0.077 \pm 0.034$ \\
      & Adversarial Estimator  & $0.612 \pm 0.495$ & $\color{blue}0.469 \pm 0.550$ & $0.417 \pm 0.549$ & $0.420 \pm 0.564$\\
    \end{tabular}
    \caption{Mean squared test error for robust mean estimation among the finite model instantiations and baselines.}  
    \label{tab:robust}
\end{table*}

\paragraph{Experimental Setup:}

The network architecture is the same as above, with $d=10$. We use $\sigma_m = 1$, $\sigma_{m'} = 2$, $\sigma_{P} = \sigma_{Q} = 1.5$, and $\epsilon = 0.2$.  All networks train on sets of size $N = 20$, and test on sets of varying size, with mean squared error as the objective.  As baselines we consider the naive sample mean, the geometric median, and the adversarially robust mean estimator proposed in~\cite{diakonikolas2017being}.  The results are given in Table~\ref{tab:robust}.

\paragraph{Discussion:}


Although the variance is quite high due to the sampling procedure, performance in this setting confirms that robust mean estimation also realizes the class separation, and that for this simple corruption model learning is competitive and in some cases superior to fixed estimators.  In particular, the advantage of $\S_1$ over the baselines is most clear for $N = 20$, the setting where it was trained.  Although the dependence of the fixed estimators on $\sigma_P$ and $\sigma_Q$ vanishes as $N \rightarrow \infty$, the dependence on these parameters is non-negligible in the regime where $N$ is small, and therefore the robust mean may not generalize in the sense of Proposition~\ref{prop:extension}.  We explore training on different $N$ sizes further in the Appendix.  The poor performance of the adversarial estimator can mainly be attributed to the fact that the number of samples is considerably smaller than the setting studied in~\cite{diakonikolas2017being}, weakening the concentration of the empirical covariance matrix on which this estimator relies.



\section{Conclusion}
\label{sec:conclusion}

In this work, we have analyzed learning and generalization of symmetric functions through the lens of neural networks defined over probability measures, which formalizes the learning of symmetric function families across varying input size. Our experimental data confirms the theoretical insights distinguishing tiers of non-linear learning, and suggests that symmetries in the input might be a natural device to study the functional spaces defined by deeper neural networks. 
Specifically, and by focusing on shallow architectures, our analysis extends the fundamental separation between 
adaptive and non-adaptive neural networks from \textcite{bach2017breaking} to symmetric functions, leading to a hierarchy of functional spaces $\S_3 \subset \S_2 \subset \S_1$, in which nonlinear learning is added into the parametrization of the network weights ($\S_2$), and into the parametrization of test functions ($\S_1$) respectively.


A crucial aspect we have not addressed, though, is the computational cost of learning in $\S_1$ through gradient-based algorithms. An important direction of future work is to build on recent advances in mean-field theory for learning shallow neural networks \parencite{chizat2020implicit, ma2019barron, ma2020quenching,de2020sparsity}.

\paragraph{Acknowledgements:}
We thank Raghav Singhal for helpful discussions regarding the proof of Theorem~\ref{thm:inclusion}.  This work has been partially supported by the Alfred P. Sloan Foundation, NSF RI-1816753, NSF CAREER CIF-1845360, and NSF CCF-1814524.

\printbibliography

\newpage
\appendix
\section{Omitted Proofs}

\begin{table}
\centering
    \caption{Summary of Notation}
    \label{tab:notation}
        \begin{tabular}{lp{11cm}}
             \toprule
             \textbf{Notation} &  \textbf{Definition}\\
             \midrule
            $\I$ & Input domain, subset of $\mathbb{R}^d$\\
            $\overline{\I}$ & $\bigcup_{N=1}^\infty \I^N$\\
            $\A_i$ & Classes of test functions $\I \rightarrow \mathbb{R}$\\
            $\gamma_i$ & Test function norm\\
            $\S_i$ & Class of functions mapping $\mathcal{P}(\I) \rightarrow \mathbb{R}$\\
            $\|\cdot\|_{\S_i}$ & Measure network norm\\
            $D$ & Map from vectors to empirical measures s.t. $D(x_1 \dots x_n) = \sum_{i=1}^n \delta_{x_i}$\\
            $\hat{\mathcal{P}}(\I)$ & $\bigcup_{N=1}^\infty D(\I^N)$\\
            $\kappa$ & Fixed probability measure over $\mathbb{S}^d$ in first layer\\
            $\nu$ & Signed measure over $\mathbb{S}^d$ in first layer\\
            $\tau$ & Fixed probability measure over $\A_i$ in second layer\\
            $\chi$ & Signed measure over $\A_i$ in second layer\\
            $Y_{k,j}$ & Orthogonal basis polynomial of degree $k$ and index $j$ on $\mathbb{S}^d$\\
            $P_k$ & Legendre polynomial of degree $k$\\
            $g_k$ & the $k$th spherical harmonic of a function $g: \mathbb{S}^d \rightarrow \mathbb{R}$\\
            \bottomrule
        \end{tabular}
\end{table}

\subsection{Proof of Proposition \ref{prop:extension}}

\begin{proof}

We remind our notation.  Given $f: \I \rightarrow \mathbb{R}$, the empirical extension $\hat{f}: \hat{\mathcal{P}}(\I) \rightarrow \mathbb{R}$ is defined as $\hat{f}(\mu) := f(x_\mu)$ where $x_\mu \in D^{-1}(\mu)$ and $\|x_\mu\|_0 = \min_{x \in D^{-1}(\mu)} \|x\|_0$.  And for $\bar{f}: \mathcal{P}(\overline{\I}) \rightarrow \mathbb{R}$, we say this is a continuous extension of $f$ if $\bar{f}$ is continuous in under the Wasserstein metric, and $f(x) = \bar{f}(D(x))$ for every real, finite-dimensional vector $x$.

For the forward implication, if $\bar{f}$ is a continuous extension, then clearly $\bar{f} = \hat{f}$ restricted to $\hat{\mathcal{P}}(\I)$.

Furthermore, continuity of $\bar{f}$ and compactness of $\mathcal{P}(\I)$ implies $\bar{f}$ is uniformly continuous, and therefore $\hat{f}$ is as well.

For the backward implication, we introduce $\hat{f}_\epsilon(\mu) = \sup_{\nu \in B_\epsilon(\mu) \cap \hat{\mathcal{P}}(\I)} \hat{f}(\nu)$ where the ball $B_\epsilon(\mu)$ is defined with the Wasserstein metric.  Note that $\hat{f}_\epsilon$ is defined over arbitrary probability measures, not just discrete measures.  Now, we introduce $\bar{f}(\mu) = \inf_{\epsilon > 0} \hat{f}_\epsilon(\mu)$, where density of the discrete measures and uniform continuity of $\hat{f}$ guarantees that $\bar{f}$ is well-defined and finite.

Uniform continuity implies if $\mu \in \hat{\mathcal{P}}(\I)$ then $\bar{f}(\mu) = \hat{f}(\mu)$.  Consider any $y \in \I^M$ such that $\mu = D(y)$, and define a sequence of vectors $y^i = (z_i, y_2, \dots, y_M)$ where $z_i \rightarrow y_1$ and all $z_i$ are distinct from elements of $y$.  Every point $y^i \in \I^M$ has a unique coordinate and therefore $\hat{f}(D(y^i)) = f_M(y^i)$.  Because $D(y^i) \rightharpoonup D(y)$, continuity implies $\hat{f}(D(y)) = f_M(y)$.  Thus, for any $y \in \I^M$, $\bar{f}(D(y)) = f_M(y)$, which implies $\bar{f}$ is an extension.

Now, suppose we have an arbitrary convergent sequence of probability measures $\mu_n \rightharpoonup \mu$.  By the density of discrete measures, we can define sequences $\mu_n^m \rightharpoonup \mu_n$ where $\mu_n^m \in \hat{\mathcal{P}}(\I)$.  In particular, we may choose these sequences such that for all $n$, $W_1(\mu_n^m, \mu_n) \leq \frac{1}{m}$.  Then for any $\epsilon > 0$,
\begin{align*}
    |\bar{f}(\mu) - \bar{f}(\mu_n)| \leq |\bar{f}(\mu) - \hat{f}_\epsilon(\mu)| + |\hat{f}_\epsilon(\mu) - \hat{f}(\mu_n^n)| + |\hat{f}(\mu_n^n) - \hat{f}_\epsilon(\mu_n)| + |\hat{f}_\epsilon(\mu_n) - \bar{f}(\mu_n)|~.
\end{align*}
Consider the simultaneous limit as $n \rightarrow \infty$ and $\epsilon \rightarrow 0$.  On the RHS, the first term vanishes by definition, and the fourth by uniform continuity.  For any $\nu \in B_\epsilon(\mu) \cap \hat{\mathcal{P}}(\I)$, $W_1(\nu, \mu_n^n) \leq W_1(\nu, \mu) + W_1(\mu, \mu_n) + W_1(\mu_n, \mu_n^n) \rightarrow 0$ in the limit.  So the second term vanishes as well by uniform continuity of $\hat{f}$.  Similarly, for any $\nu \in B_\epsilon(\mu_n) \cap \hat{\mathcal{P}}(\I)$, $W_1(\nu, \mu_n^n) \leq W_1(\nu, \mu_n) + W_1(\mu_n, \mu_n^n) \rightarrow 0$, and the third term vanishes by uniform continuity.  This proves continuity of $\bar{f}$.

\end{proof}

\subsection{Proof of Proposition \ref{prop:rad}}

\begin{proof} We can decompose the generalization error:

\begin{align*}
    \E & \sup_{\|f\|_{\mathcal{S}_1} \leq \delta} \left| \E_{\mu \sim \mathcal{D}} \ell(f^*(\mu), f(\mu)) - \frac{1}{n} \sum_{i=1}^n \ell(f^*(\mu_i), f(\mu_i)) \right| \\
    & \leq 2\E \sup_{\|f\|_{\mathcal{S}_1} \leq \delta} \left|\frac{1}{n} \sum_{i=1}^n \epsilon_i \ell(f^*(\mu_i), f(\mu_i)) \right| \\
    & \leq 2\E \sup_{\|f\|_{\mathcal{S}_1} \leq \delta} \left|\frac{1}{n} \sum_{i=1}^n \epsilon_i \ell(f^*(\mu_i), 0) \right| + 2\E \sup_{\|f\|_{\mathcal{S}_1} \leq \delta} \left|\frac{1}{n} \sum_{i=1}^n \epsilon_i (\ell(f^*(\mu_i), 0) - \ell(f^*(\mu_i), f(\mu_i))) \right| \\
    & \leq \frac{2RG\delta}{\sqrt{n}} + 4R^2G\E \sup_{\|f\|_{\mathcal{S}_1} \leq \delta} \left|\frac{1}{n} \sum_{i=1}^n \epsilon_i f(\mu_i) \right| ~,
\end{align*}
where the second step uses symmetrization through the Rademacher random variable $\epsilon$, and the fourth is by assumption on the loss function $\ell$, from the fact that $\|f\|_{\mathcal{S}_1} \leq \delta$ implies $\|f\|_\infty \leq 2R^2\delta$.  We decompose the Rademacher complexity (removing the absolute value by symmetry): 
\begin{align*}\label{eq:rademacher_start}
    \E \left[\sup_{\|f\|_{\mathcal{S}_1} \leq \delta} \frac{1}{n} \sum_{i=1}^n \epsilon_i f(\mu_i) \right]
    & = \E \left[\sup_{\substack{\chi \in \mathcal{M}(\mathcal{A}) \\ \|\chi\|_{TV} \leq \delta}}  \frac{1}{n} \sum_{i=1}^n \epsilon_i \int \sigma(\langle \phi, \mu_i \rangle) \chi(d \phi)  \right] \\
    & = \delta \E \left[\sup_{\gamma_1(\phi) \leq 1}  \frac{1}{n} \sum_{i=1}^n \epsilon_i \sigma(\langle \phi, \mu_i \rangle)  \right] \\
    & \leq \delta \E \left[\sup_{\gamma_1(\phi) \leq 1}  \frac{1}{n} \sum_{i=1}^n \epsilon_i \langle \phi, \mu_i \rangle  \right]~,
\end{align*}
where the last step uses the contraction lemma and that $\sigma$ is 1-Lipschitz.

Now, using the neural network representation of $\phi$:
\begin{align*}
    \E \left[\sup_{\|f\|_{\mathcal{S}_1} \leq \delta} \frac{1}{n} \sum_{i=1}^n \epsilon_i f(\mu_i) \right] & \leq \delta \E \left[\sup_{\|\nu\|_{TV} \leq 1}  \frac{1}{n} \sum_{i=1}^n \epsilon_i \int_{\mathbb{R}^d} \int_{\mathbb{S}^{d}} \sigma(\langle w, \tilde{x}_i \rangle)^2 \nu(dw) \mu_i(dx_i) \right] \\
    & \leq \delta \E \left[\sup_{\|w\|_2 \leq 1}  \frac{1}{n} \sum_{i=1}^n \epsilon_i \E_{\mu_i}[\sigma(\langle w, \tilde{x}_i \rangle)^2] \right] \\
    & \leq \delta \E_{\mu_1, \dots, \mu_n} \left[ \E \left[\sup_{\|w\|_2 \leq 1}  \frac{1}{n} \sum_{i=1}^n \epsilon_i \sigma(\langle w, \tilde{x}_i \rangle)^2 \middle| x_1, \dots, x_n \right] \right]~, \\
\end{align*}
where the last step uses Jensen's inequality and Fubini's theorem.  The conditional expectation is itself a Rademacher complexity, so we may apply the contraction lemma again as the $\sigma(\langle w, \tilde{x}_i \rangle)^2$ activation is $2\sqrt{2}R$-Lipschitz for the domain $\I$ of $\tilde{x}_i$.  Using the variational definition of the $l_2$ norm we have the bound:
\begin{align*}
    \E \left[\sup_{\|f\|_{\mathcal{S}_1} \leq \delta} \frac{1}{n} \sum_{i=1}^n \epsilon_i f(\mu_i) \right] \leq \frac{ 4R^2\delta}{\sqrt{n}}~.
\end{align*}
The high probability bound then follows from McDiarmid's inequality.

\end{proof}

\subsection{Proof of Proposition~\ref{prop:concentration}}

\begin{proof}

We appeal to the following concentration inequality for empirical measures under the Wasserstein metric:

\begin{theorem}[Theorem 1 in~\textcite{fournier2015rate}]
    Let $\hat{\mu}_N = \frac{1}{N} \sum_{j=1}^N \delta_{X_j}$ where $X_i \sim \mu \in \mathcal{P}(\I)$ iid.  Then $\E[W_1(\hat{\mu}_N, \mu)] \lesssim N^{-1/d}$ where $d > 2$ is the dimension of $\I$.
\end{theorem}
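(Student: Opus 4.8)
The plan is to exploit that, because $\xi=\delta_{\mu^*}$, every sampled input $\mu_i$ is the empirical measure $\hat\mu_{N_i}$ of $N_i\overset{iid}{\sim}\Omega$ points drawn iid from the \emph{same} measure $\mu^*$. Writing $f(\mu_i)=f(\mu^*)+(f(\mu_i)-f(\mu^*))$ splits the Rademacher average into a fluctuation term centered at the single point $\mu^*$, which we control at the $n^{-1/2}$ rate using only that the unit ball of $\S_1$ is uniformly bounded, and a bias term governed by how close each empirical measure is to $\mu^*$ in $W_1$, which we control using that the unit ball of $\S_1$ is uniformly Lipschitz in $W_1$ together with the empirical-measure concentration rate of \textcite{fournier2015rate}.

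First I would establish two uniform estimates on $\{f:\|f\|_{\S_1}\le\delta\}$. For such $f$, write $f(\mu)=\int_{\A_1}\sigma(\langle\phi,\mu\rangle)\,\chi(d\phi)$ with $\|\chi\|_{\mathrm{TV}}\le\delta$. Each admissible test function $\phi\in\A_1$ is a finite sum or integral, of total (TV-)weight at most $1$, of units $x\mapsto\sigma_2(\langle w,\tilde x\rangle)$ with $\|w\|_2\le1$; since $\|\tilde x\|_2^2=\|x\|_2^2+R^2\le 2R^2$ on $\I$, each such unit is bounded by $2R^2$ on $\I$ and is $2\sqrt2 R$-Lipschitz in $x$ (the same Lipschitz estimate for $x\mapsto\sigma(\langle w,\tilde x\rangle)^2$ used in the proof of Proposition~\ref{prop:rad}), hence so is $\phi$. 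Boundedness gives $\|f\|_\infty\le 2R^2\delta$, as already used there. For the Lipschitz bound, Kantorovich--Rubinstein duality gives $|\langle\phi,\mu\rangle-\langle\phi,\mu'\rangle|\le 2\sqrt2 R\,W_1(\mu,\mu')$ for any $\mu,\mu'\in\PI$; since $\sigma$ is $1$-Lipschitz and $\|\chi\|_{\mathrm{TV}}\le\delta$, integrating yields $|f(\mu)-f(\mu')|\le 2\sqrt2 R\delta\,W_1(\mu,\mu')$ for every $f$ with $\|f\|_{\S_1}\le\delta$.

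Now, decomposing
\[
\frac1n\sum_{i=1}^n\epsilon_i f(\mu_i)=f(\mu^*)\,\frac1n\sum_{i=1}^n\epsilon_i+\frac1n\sum_{i=1}^n\epsilon_i\big(f(\mu_i)-f(\mu^*)\big),
\]
I would take $\sup_{\|f\|_{\S_1}\le\delta}$ and then expectation. For the first piece, $|f(\mu^*)|\le 2R^2\delta$ uniformly, so its contribution is at most $2R^2\delta\,\E\big|\tfrac1n\sum_i\epsilon_i\big|\le 2R^2\delta\,n^{-1/2}$ by Jensen. For the second piece, the uniform $W_1$-Lipschitz bound gives, pointwise in the sample, $\sup_{\|f\|_{\S_1}\le\delta}\big|\tfrac1n\sum_i\epsilon_i(f(\mu_i)-f(\mu^*))\big|\le\tfrac1n\sum_i 2\sqrt2 R\delta\,W_1(\mu_i,\mu^*)$, so its expectation is at most $2\sqrt2 R\delta\,\E_{N\sim\Omega}\E[W_1(\hat\mu_N,\mu^*)]\lesssim R\delta\,\E_{N\sim\Omega}[N^{-1/d}]$, applying Theorem~1 of \textcite{fournier2015rate} conditionally on $N$ (its constant depending only on $\I$, hence on $d$ and $R$). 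Adding the two pieces, and using $R\le R^2$ (harmlessly, as $R\ge1$ may be assumed), gives the claimed bound $\lesssim\delta R^2\big(n^{-1/2}+\E_{N\sim\Omega}[N^{-1/d}]\big)$.

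The only substantive step is the uniform-regularity claim for the unit ball of $\S_1$, and within it the $W_1$-Lipschitz estimate: it relies on the test functions in $\A_1$ being built from bounded-weight squared-ReLU neurons on the bounded domain $\I$, which is exactly what makes Kantorovich--Rubinstein applicable with a $\delta$-independent constant, turning a bound on $\phi$ into a bound on $\mu\mapsto\langle\phi,\mu\rangle$. Everything else is the routine symmetrization/Jensen manipulation plus the quoted concentration rate; note that $d>2$ is needed for \textcite{fournier2015rate} in this form, and the $N^{-1/d}$ exponent is precisely why, as the surrounding discussion observes, generalization across input sizes becomes (near-)trivial only when $N$ is large relative to $d$.
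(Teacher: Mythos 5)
The statement you were asked to prove is the empirical-measure concentration bound itself, $\E[W_1(\hat{\mu}_N,\mu)]\lesssim N^{-1/d}$, which the paper does not prove: it is imported verbatim as Theorem~1 of \textcite{fournier2015rate} inside the proof of Proposition~\ref{prop:concentration}. Your proposal does not prove it either — you invoke exactly this bound as a black box ("applying Theorem~1 of \textcite{fournier2015rate} conditionally on $N$"), so with respect to the stated target the argument is circular: the one substantive claim is assumed rather than established. A self-contained proof would need the standard multiscale argument behind that theorem: partition the bounded domain $\I$ dyadically at scales $2^{-k}$, use the fact that transporting $\hat{\mu}_N$ to $\mu$ within cells of the partition at scale $k$ costs at most $2^{-k}\sum_{B\in\mathcal{P}_k}|\hat{\mu}_N(B)-\mu(B)|$, bound each term in expectation by $\min\bigl(2\mu(B),\sqrt{\mu(B)/N}\bigr)$ so the scale-$k$ contribution is $\lesssim 2^{-k}\min\bigl(1,\sqrt{2^{kd}/N}\bigr)$, and sum over $k$; for $d>2$ the geometric series is dominated by the crossover scale $2^{-k}\simeq N^{-1/d}$, giving the claimed rate. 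None of this appears in your write-up.

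What you have written instead is a proof of Proposition~\ref{prop:concentration} assuming the quoted theorem, and as such it is correct and essentially the paper's own argument: the paper likewise splits the Rademacher average into the term at $\mu^*$ (controlled at rate $n^{-1/2}$ by boundedness) plus the deviation term controlled by the uniform $2\sqrt{2}R$-Lipschitz bound on test functions together with Kantorovich--Rubinstein and the $N^{-1/d}$ rate, then reuses the decomposition from Proposition~\ref{prop:rad}. Your only organizational difference is that you carry out the decomposition directly on the ball $\{\|f\|_{\S_1}\le\delta\}$, bounding $\|f\|_\infty\le 2R^2\delta$ and the $W_1$-Lipschitz constant of $f$ by $2\sqrt{2}R\delta$, rather than first reducing to a supremum over test functions $\phi$; this is a harmless and slightly cleaner packaging of the same estimates. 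So the downstream proposition is fine — the gap is solely that the statement actually posed, the Fournier--Guillin rate, is left unproven.
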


It's easy to see that any $\phi \in \A_2$ has Lipschitz constant bounded above by $2\sqrt{2}R$, and therefore $\sup_{\phi \in \A_2}|\langle \phi, \mu - \mu^* \rangle| \leq 2\sqrt{2}R W_1(\mu, \mu^*)$.  Therefore

\begin{align*}
    \E \left[\sup_{\phi \in \mathcal{A}} \frac{1}{n} \sum_{i=1}^n \epsilon_i \left  \langle \phi,    \mu_i \right\rangle \right]
    & \leq \E \left[\sup_{\phi \in \mathcal{A}} \frac{1}{n} \sum_{i=1}^n \epsilon_i\left\langle \phi,    \mu^* \right\rangle \right] + \E \left[\sup_{\phi \in \mathcal{A}} \frac{1}{n} \sum_{i=1}^n \epsilon_i \left\langle \phi,    (\mu^* - \mu_i) \right\rangle \right] \\
    & \leq 2R^2 \E \left[\left|\frac{1}{n} \sum_{i=1}^n \epsilon_i \right| \right] + 2\sqrt{2}R \E[W_1(\mu_i, \mu^*)] \\
    & \lesssim R^2(n^{-1/2} + \E_{N \sim \Omega}[N^{-1/d}])~.
\end{align*}
The conclusion then follows from the same Rademacher decomposition as in Proposition~\ref{prop:rad}.

\end{proof}

\subsection{Proof of Theorem \ref{thm:inclusion}}

For simplicity, we consider spherical inputs rather than Euclidean inputs, and so we may consider $k(x,y) = \int_{\mathbb{S}^d} \sigma(\langle w, x \rangle) \sigma(\langle w, y \rangle) \kappa(d w)$ without the $\tilde{x}$ bias terms, and assume $x \in \mathbb{S}^d$.  Note that the Euclidean inputs may be seen as a restriction of the spherical inputs to an appropriate spherical cap, see~\textcite{bach2017breaking} for details of this construction.


\subsubsection{Spherical Harmonics and Kernel Norm Background}

We'll use $\simeq$ to denote equality up to universal constants.  To understand functions in $\A_2$, we require the following details of spherical harmonics~\cite{efthimiou2014spherical}.

A basis on $\mathbb{S}^d$ is given by the orthogonal polynomials $Y_{k,j}$, where $k \geq 0$ and $ 1 \leq j \leq N(d,k)$ where

\begin{align*}
    N(d,k) &\simeq \frac{k+d}{k} \frac{\Gamma(k+d-1)}{\Gamma(d)\Gamma(k)}\\
        &\simeq \frac{k+d}{k} \frac{(k+d)^{k+d - 3/2}}{d^{d-1/2}k^{k-1/2}}
\end{align*}

The Legendre polynomials $P_k(t)$ act on one dimensional real inputs and satisfy the addition formula

\begin{align*}
    \sum_{j=1}^{N(d,k)} Y_{k,j}(x) Y_{k,j}(y) = N(d,k) P_k(\langle x, y \rangle)
\end{align*}

Finally, given a function $g: \mathbb{S}^d \rightarrow \mathbb{R}$, the $k$th spherical harmonic of $g$ is the degree $k$ component of $g$ in the orthogonal basis, equivalently written as

\begin{align*}
    g_k(x) = N(d,k) \int_{\mathbb{S}^d} g(y) P_k(\langle x, y \rangle) \kappa(dy)
\end{align*}

We also require several calculations on functions with bounded functional norm and projections~\cite{bach2017breaking}, where we remind that we're using the activation $\sigma(x)^2$.  For $g \in \A_2$ or $g(x) = \sigma(\langle w, x \rangle)^2$ for any $w \in \mathbb{S}^d$, we have that $g_{2k} = 0$ for all $k \geq 2$.

For $g \in \A_2$, the norm of each harmonic satisfies $\|g_k\|_2^2 = \lambda_k^2 N(d,k)$, and the kernel norm can be calculated explicitly as

\begin{align*}
    \gamma_2(g)^2 = \sum_{k=0, \lambda_k \neq 0}^\infty \lambda_k^{-2} \|g_k\|_{L_2}^2
\end{align*}

We have that $\lambda_1 \simeq d^{-1}$, $\lambda_k = 0$ for $k \geq 3$ and $k$ even, and for $k \geq 3$ and $k$ odd:

\begin{equation}
    \lambda_k \simeq \pm \frac{d^{d/2 + 1/2} k^{k/2 - 3/2}}{(d+k)^{k/2 + d/2 + 1}}
\end{equation}

\subsubsection{Separation of $\S_1$ and $\S_2$}

Let $g(x) = \sigma(\langle x, w \rangle)^2$ for an arbitrary $w \in \mathbb{S}^d$, we have that $\|g_k\|_2^2 = \lambda_k^2 N(d,k)$.  Define $\tilde{g} = g - \sum_{i=0}^{d^2-1} g_i$.

The following lemmas capture that $\tilde{g}$ has high correlation with $g$ and exponentially small correlation with functions in $\A_2$.

\begin{lemma}\label{lem:correlation-lower-bound}  
    The correlation lower bound $\langle g, \tilde{g} \rangle \gtrsim d^{-21/2}$ holds.
\end{lemma}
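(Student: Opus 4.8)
The plan is to expand everything in the spherical harmonic basis and reduce the correlation $\langle g,\tilde g\rangle$ to a tail sum of squared harmonic norms. Since $\tilde g = g - \sum_{i=0}^{d^2-1}g_i$ is precisely the high-frequency part of $g$, orthogonality of distinct harmonics gives $\langle g,\tilde g\rangle = \|\tilde g\|_2^2 = \sum_{k\geq d^2}\|g_k\|_2^2 = \sum_{k\geq d^2}\lambda_k^2 N(d,k)$, where I use the stated identity $\|g_k\|_2^2 = \lambda_k^2 N(d,k)$ valid for $g(x)=\sigma(\langle x,w\rangle)^2$ (this is the content of the ``kernel norm background'' subsection, noting that a single squared-ReLU neuron lies on the RKHS sphere). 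Recall also that the even harmonics $\lambda_{2k}=0$ for $k\geq 2$, so the sum is really over odd $k\geq d^2$ (or $k\geq d^2+1$ if $d$ is even), which does not affect the order of magnitude.

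Next I would lower-bound this tail by its first term, $\langle g,\tilde g\rangle \geq \lambda_{k_0}^2 N(d,k_0)$ for $k_0$ the smallest admissible odd index $\geq d^2$, i.e. $k_0 \simeq d^2$. Then it is a matter of plugging in the asymptotic formulas stated in the excerpt: $\lambda_k \simeq d^{d/2+1/2}k^{k/2-3/2}/(d+k)^{k/2+d/2+1}$ and $N(d,k)\simeq \frac{k+d}{k}\frac{(k+d)^{k+d-3/2}}{d^{d-1/2}k^{k-1/2}}$, evaluated at $k = k_0 \simeq d^2$. I would substitute $k=d^2$, so that $k+d \simeq d^2$ as well (since $d+k = d^2+d = d^2(1+1/d)$ and the correction $(1+1/d)^{\Theta(d^2)}$ is $e^{\Theta(d)}$, which I should track but which ultimately contributes at most polynomial-in-$d$-in-the-exponent factors that I'll need to handle carefully — see below), and collect powers of $d$. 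A careful bookkeeping of the exponents — $\lambda_{d^2}^2$ contributes $d^{d+1}\cdot d^{2(d^2/2-3/2)}\big/ d^{2(d^2/2+d/2+1)}$ up to the $(1+1/d)$ factors, and $N(d,d^2)$ contributes $d^2\cdot d^{2(d^2+d-3/2)}\big/\big(d^{d-1/2}d^{2(d^2-1/2)}\big)$ — should collapse to $d^{-21/2}$ after the $e^{\Theta(d)}$ factors from $(1+1/d)$ terms are shown to combine to a constant or a subpolynomial factor absorbed by $\gtrsim$.

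The main obstacle I anticipate is exactly this exponent arithmetic: the formulas for $\lambda_k$ and $N(d,k)$ involve terms like $(d+k)^{k/2+d/2+1}$ and $k^{k-1/2}$ whose logarithms are $\Theta(d^2\log d)$, so the leading $\Theta(d^2\log d)$ contributions must cancel exactly, leaving a residual of order $\log d$ in the exponent (giving the $d^{-21/2}$), and one must also verify that the $\Theta(d)$-order terms in the exponent (coming from $\log(1+k/d)$ or $\log(1+d/k)$ expansions, e.g. $(d^2+d)\log(1+1/d)\approx d - 1/2 + O(1/d)$) contribute only a bounded multiplicative constant, not something growing. Concretely I would write $k_0 = d^2$, set $r = k_0/d = d$, express $\log \lambda_{k_0}^2 + \log N(d,k_0)$ as a function of $d$ via Stirling/Taylor expansion of all the $(\cdot)^{(\cdot)}$ factors, and check that the $d^2\log d$ and $d\log d$ and $d$ terms vanish, isolating the surviving $-\tfrac{21}{2}\log d + O(1)$. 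If the exponent does not land exactly on $-21/2$ with the naive choice $k_0 = d^2$, I would revisit whether the intended $k_0$ is the least odd integer $\geq d^2$ versus something like $d^2+1$ or whether an additional $\Theta(1)$ number of low-order tail terms must be summed; but the structure of the argument — orthogonality, drop to the leading tail term, substitute asymptotics — is robust, and only the final constant in the exponent is delicate.
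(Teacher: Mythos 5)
Your proposal is correct in outline and shares the paper's core reduction: by orthogonality of spherical harmonics, $\langle g,\tilde g\rangle = \sum_{k\geq d^2}\|g_k\|_2^2 = \sum_{k\geq d^2}\lambda_k^2 N(d,k)$, and the remainder is asymptotics. The place where the paper is meaningfully cleaner, and which directly addresses the obstacle you flag, is that it first simplifies $\lambda_k^2 N(d,k)$ \emph{symbolically} in $(d,k)$ before ever restricting to the tail: the $d^{\,d}$, $k^{\,k}$ and $(k+d)^{\,k+d}$ pieces cancel exactly and one is left with a short product of the form $d^{3/2}\,k^{-a}\,(k+d)^{-b}$; then $k+d\leq 2k$ gives $\gtrsim d^{3/2}k^{-(a+b)}$, and the tail is bounded by the integral $\int_{d^2}^\infty k^{-(a+b)}\,dk$, no Stirling or $(1+1/d)^{\Theta(d^2)}$ bookkeeping needed anywhere. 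Your route of substituting $k=d^2$ into the raw formulas does work — the $e^{\Theta(d)}$ factors you worry about cancel between $\lambda_{k}^2$ and $N(d,k)$, which is exactly the same cancellation the paper exploits at the symbolic level, so the first term alone already lands on $d^{-21/2}$ — but you have left that cancellation unverified and called it only ``should collapse,'' and your stated fallback of adding a $\Theta(1)$ number of extra tail terms would not change the order of magnitude. The paper also sums the whole tail rather than keeping just the first term, which in principle buys an extra polynomial factor, though for the lemma as stated the first term suffices.
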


\begin{proof}

Note that

\begin{equation}
    \langle g, \tilde{g} \rangle = \sum_{k=d^2} \|g_k\|_2^2 = \sum_{k=d^2} \lambda_k^2 N(d,k)
\end{equation}

We can calculate, because $k+d \leq 2k$:

\begin{align*}
    \lambda_k^2 N(d,k) & \simeq \frac{d^{d + 1} k^{k - 3}}{(d+k)^{k + d + 2}} \cdot \frac{k+d}{k} \frac{(k+d)^{k+d - 3/2}}{d^{d-1/2}k^{k-1/2}} \\
    & \simeq d^{3/2} k^{-7/2} (k+d)^{-7/2}\\
    & \gtrsim d^{3/2} k^{-7} 
\end{align*}

And therefore
\begin{align*}
    \langle g, \tilde{g} \rangle \gtrsim \sum_{k=d^2}^\infty d^{3/2} k^{-7}  \geq d^{3/2} \int_{d^2}^\infty k^{-7} dk \simeq d^{3/2} (d^2)^{-6}
\end{align*}

which yields the desired lower bound.

\end{proof}

\begin{lemma}\label{lem:correlation-upper-bound}  
    The value of the optimization problem
    \begin{equation*}
    \begin{aligned}
    \max_{\phi} \quad & \langle \phi, \tilde{g} \rangle_{L_2}\\
    \textrm{s.t.} \quad & \gamma_2(\phi)^2 \leq \delta^2
    \end{aligned}
    \end{equation*}
    is upper bounded by $\delta \cdot d^{1/2-d/3}$
\end{lemma}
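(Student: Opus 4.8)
The plan is to recast the constrained maximization in the spherical-harmonic basis and kill it with a single weighted Cauchy--Schwarz inequality. First I would record the structure of $\tilde g = g - \sum_{i=0}^{d^2-1} g_i$: since $g(x) = \sigma(\langle w, x\rangle)^2$ has $g_{2k} = 0$ for all $k \ge 2$, the harmonic expansion of $\tilde g$ is supported only on \emph{odd} degrees $k \ge d^2$, in particular only on degrees with $\lambda_k \ne 0$; conversely any feasible $\phi$ (having $\gamma_2(\phi) < \infty$) must satisfy $\phi_k = 0$ whenever $\lambda_k = 0$, so restricting attention to degrees $k \ge d^2$ loses nothing. Then I would write $\langle \phi, \tilde g\rangle_{L_2} = \sum_{k \ge d^2} \langle \phi_k, g_k\rangle_{L_2}$, apply Cauchy--Schwarz inside each degree to get $\langle \phi_k, g_k\rangle_{L_2} \le \|\phi_k\|_{L_2}\|g_k\|_{L_2}$, and then apply Cauchy--Schwarz across degrees with weights $\lambda_k^{-1}$ and $\lambda_k$:
\[
\langle \phi, \tilde g\rangle_{L_2} \le \Big(\sum_{k \ge d^2} \lambda_k^{-2}\|\phi_k\|_{L_2}^2\Big)^{1/2}\Big(\sum_{k \ge d^2} \lambda_k^2\|g_k\|_{L_2}^2\Big)^{1/2} \le \gamma_2(\phi)\Big(\sum_{k \ge d^2} \lambda_k^4 N(d,k)\Big)^{1/2} \le \delta\Big(\sum_{k \ge d^2} \lambda_k^4 N(d,k)\Big)^{1/2},
\]
using $\gamma_2(\phi)^2 = \sum_k \lambda_k^{-2}\|\phi_k\|_{L_2}^2$ and the single-neuron identity $\|g_k\|_{L_2}^2 = \lambda_k^2 N(d,k)$.

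The second step is to bound the tail $T_d := \sum_{k \ge d^2} \lambda_k^4 N(d,k)$, reusing the exponent bookkeeping from the proof of Lemma~\ref{lem:correlation-lower-bound}. There one obtains $\lambda_k^2 N(d,k) \simeq d^{3/2} k^{-7}$ for $k \ge d$, and from the stated asymptotics $\lambda_k^2 \simeq d^{d+1} k^{k-3}(d+k)^{-k-d-2} \lesssim d^{d+1} k^{-d-5}$ after bounding $(d+k) \ge k$. Multiplying the two estimates gives $\lambda_k^4 N(d,k) \lesssim d^{d+5/2} k^{-d-12}$ for every $k \ge d^2$, so a routine integral comparison for the convergent series yields
\[
T_d \lesssim d^{d+5/2}\sum_{k \ge d^2} k^{-d-12} \lesssim d^{d+5/2}\cdot d^{-2d-23} = d^{-d-41/2}.
\]
Hence $\langle \phi, \tilde g\rangle_{L_2} \le \delta\, T_d^{1/2} \lesssim \delta\, d^{-d/2-41/4} \le \delta\, d^{1/2-d/3}$ for $d$ large, which is the claimed bound (in fact a much stronger one, so only a crude version of this estimate is actually needed downstream).

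I do not expect a substantive obstacle: the whole argument is a weighted Cauchy--Schwarz in the RKHS geometry followed by a tail sum, and all the delicate Stirling-type asymptotics of $N(d,k)$ and $\lambda_k$ have already been carried out for Lemma~\ref{lem:correlation-lower-bound}. The two points I would be careful to spell out are (i) that truncating the harmonic sum to degrees $k \ge d^2$ is \emph{exact}, because $\tilde g$ has no component below degree $d^2$, so the maximizer cannot profit from placing $\phi$-mass on low degrees; and (ii) that the feasible $\phi$'s and the function $g$ have matching support in degree (namely $k \in \{0,1,2\}$ together with odd $k \ge 3$), so no factor in the Cauchy--Schwarz is wasted. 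The only genuinely laborious part is tracking the powers of $d$ and $k$ through the Gamma-function expansions, which is bookkeeping rather than difficulty.
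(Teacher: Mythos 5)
Your proposal is correct and takes essentially the same route as the paper: the weighted Cauchy--Schwarz you apply across degrees is precisely the dual form of the Lagrange-multiplier computation the paper carries out (for a linear objective over a quadratic ball the two give the identical optimal value $\delta\sqrt{\sum_{k\ge d^2}\lambda_k^4 N(d,k)}$), and your tail bookkeeping for $\sum_{k\ge d^2}\lambda_k^4 N(d,k)$ is the same Stirling-type estimate the paper uses, just carried out to a slightly sharper exponent.
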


\begin{proof}
    By orthogonality we may assume $\phi_k = \alpha_k \tilde{g}_k = \alpha_k g_k$, where $\alpha_k = 0$ for $k < d^2$.  Then the problem is equivalently
    
    \begin{equation*}
    \begin{aligned}
    \min_{\alpha} \quad & - \sum_{k=d^2}^\infty \alpha_k \|g_k\|_2^2 \\
    \textrm{s.t.} \quad & \sum_{k=d^2}^\infty \alpha_k^2 \lambda_k^{-2} \|g_k\|_2^2 \leq \delta^2
    \end{aligned}
    \end{equation*}
    
    Taking $\lambda$ as a Lagrangian multiplier yields the optimality condition $\alpha_k = (2\lambda)^{-1} \lambda_k^2$.
    
    Plugging this into the constraint and introducing notation $S$ yields 
    
    \begin{align*}
        (2\lambda)^{-2} S : = (2\lambda)^{-2} \sum_{k=d^2}^\infty \lambda_k^2 \|g_k\|_2^2 \leq \delta^2
    \end{align*}
    
    Then the objective (returned to a maximum) obeys the bound
    
    \begin{align*}
        \sum_{k=d^2} (2\lambda)^{-1} \lambda_k^2 \|g_k\|_2^2 & = (2\lambda)^{-1} S\\
        & \leq \delta \sqrt{S}
    \end{align*}
    
    So it remains to calculate $S$.  Plugging in the value of $\|g_k\|_2^2$ gives
    
    \begin{equation*}
        S = \sum_{k=d^2}^\infty \lambda_k^4 N(d,k)
    \end{equation*}
    
    We can give the form of each term, using that $k \geq d^2$:
    
    \begin{align*}
        \lambda_k^4 N(d,k) & \lesssim d^{3/2} k^{-7} \frac{d^{d + 1} k^{k - 3}}{(d+k)^{k + d + 2}}\\
        & \lesssim d^{3/2} k^{-7} \frac{d^{d + 1} k^{k - 3}}{k^{k + d + 2}}\\
        & \lesssim d^{5/2} k^{-12} \left(\frac{d}{k} \right)^d\\
        & \lesssim d^{5/2} k^{-12} \left(\frac{d}{k^{1/2}} \cdot \frac{1}{k^{1/2}}\right)^d\\
        & \lesssim d^{5/2} k^{-12} k^{-d/2}\\
    \end{align*}
    
    For sufficiently large $d$, we may ignore the lower terms and reduce the exponential term to $k^{-d/3}$, then:
    
    \begin{align*}
        S \lesssim \sum_{k=d^2}^\infty k^{-d/3} \simeq \int_{d^2}^\infty k^{-d/3} \simeq d^{-1}(d^2)^{1-d/3}
    \end{align*}
    
    The bound follows.
    
\end{proof}

Let $h = g - g_0 - g_2$, and define $f_1(\mu) = d^{-1}\sigma(\langle h, \mu \rangle)$, remembering that we're using the regular ReLU for the measure network activation.

\begin{lemma}
    $\|f_1\|_{\S_1} \lesssim 1$.
\end{lemma}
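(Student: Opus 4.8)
The plan is to show that $f_1$, as a measure network with a single neuron in the outer layer, has variation norm $\|f_1\|_{\S_1}$ controlled by the $\gamma_1$-norm of the test function $h$ used inside. Recall that $\S_1 = \mathcal{G}_1(\A_1)$ and the norm $\|f_1\|_{\S_1}$ is the infimum of $\|\chi\|_{\mathrm{TV}}$ over representations $f_1(\mu) = \int_{\A_1} \sigma(\langle \phi, \mu\rangle)\,\chi(d\phi)$, where $\A_1 = \A_{1,\infty}$ is the unit ball $\{\phi \in \FF_1 : \gamma_1(\phi) \le 1\}$. Since $f_1(\mu) = d^{-1}\sigma(\langle h, \mu\rangle)$, the obvious candidate is to take $\chi$ to be a single atom: if $\gamma_1(h) \le c$ for some constant $c$, then by homogeneity of $\sigma$ we may write $f_1(\mu) = d^{-1} c\, \sigma(\langle h/c, \mu\rangle)$ with $h/c \in \A_1$, giving $\chi = (d^{-1} c)\,\delta_{h/c}$ and hence $\|f_1\|_{\S_1} \le d^{-1} c \le c$. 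So the whole statement reduces to the claim $\gamma_1(h) \lesssim d$ (then the factor $d^{-1}$ kills the $d$).

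First I would bound $\gamma_1(h)$ where $h = g - g_0 - g_2$ and $g(x) = \sigma_2(\langle x, w\rangle) = \sigma(\langle x,w\rangle)^2$. Since $g$ itself is a single squared-ReLU neuron, $\gamma_1(g) \le 1$ by definition (the representing measure is $\delta_w$, and actually with the proper normalization the squared-ReLU neuron has unit $\gamma_1$-norm, up to the homogeneity scaling which the authors absorb into the radius-$1$ convention). The subtraction of the low-degree harmonics $g_0$ (a constant) and $g_2$ (a degree-2 spherical harmonic component) needs to be accounted for: I would argue that $g_0$ and $g_2$ each lie in $\FF_1$ with $\gamma_1$-norm at most a constant (a constant function is trivially represented; for $g_2$ one uses that low-degree spherical harmonics are smooth functions on the sphere and the explicit spectral decomposition from Section~\ref{sec:convex_shallow}, i.e. $g_2$ corresponds to the $\lambda_2$ eigenvalue — but wait, $\lambda_2 = 0$ since even $k\ge 2$ gives $\lambda_k=0$; so in fact $g_2 = 0$ for a single squared-ReLU neuron). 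Using the stated fact that for $g = \sigma(\langle w,\cdot\rangle)^2$ one has $g_{2k} = 0$ for all $k \ge 2$, we get $g_2 = 0$, so $h = g - g_0$, and $\gamma_1(h) \le \gamma_1(g) + \gamma_1(g_0) \lesssim 1$. This is already better than the claimed $\lesssim 1$ bound — so in fact the $d^{-1}$ prefactor makes $\|f_1\|_{\S_1} \lesssim d^{-1} \le 1$ with huge room to spare.

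The key steps in order: (1) recall the definition of $\|\cdot\|_{\S_1}$ as an infimum over TV norms of outer-layer measures $\chi$ supported on $\A_1$; (2) observe $f_1 = d^{-1}\sigma(\langle h,\cdot\rangle)$ is realized by the single atom $\chi = d^{-1}\gamma_1(h)\,\delta_{h/\gamma_1(h)}$ provided $\gamma_1(h) < \infty$, whence $\|f_1\|_{\S_1} \le d^{-1}\gamma_1(h)$; (3) bound $\gamma_1(h)$ using $h = g - g_0 - g_2$, the triangle inequality for $\gamma_1$, $\gamma_1(g) \le 1$ for a single squared-ReLU neuron, $\gamma_1$ of the constant $g_0$ bounded by a constant, and $g_2 = 0$ from the parity property $g_{2k} = 0$, $k \ge 2$; (4) conclude $\gamma_1(h) \lesssim 1$ so $\|f_1\|_{\S_1} \lesssim d^{-1} \lesssim 1$.

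The main obstacle is step (3) — in particular, carefully justifying that subtracting the low-degree harmonic projections does not blow up the $\gamma_1$-norm. One has to be a little careful that the projection operator $g \mapsto g_0$ (integration against a Legendre polynomial / constant) maps $\FF_1$ into $\FF_1$ with bounded operator norm; this follows because projection onto a fixed finite-dimensional space of smooth functions is a bounded operation and those spaces are contained in $\FF_1$ with controlled norm, but it would be worth double-checking whether the authors instead exploit that $g_0$ is just a scalar constant (so $\sigma(\langle h,\mu\rangle) = \sigma(\langle g,\mu\rangle - g_0)$, shifting into the bias term $R$ of $\tilde x$, which is exactly why the $[x,R]^T$ augmentation exists) and thereby avoid any operator-norm argument entirely. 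I would write the argument using whichever of these is cleaner; the homogeneity/atom reduction in steps (1)–(2) is routine.
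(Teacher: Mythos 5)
There is a genuine gap in step (3), and it stems from a misreading of the paper's harmonic decay statement. The paper says that for $g(x) = \sigma(\langle w, x\rangle)^2$ one has $g_{2k} = 0$ for all $k \geq 2$; the index $2k$ with $k \geq 2$ refers to $g_4, g_6, g_8, \dots$, \emph{not} $g_2$. Likewise ``$\lambda_k = 0$ for $k \geq 3$ and $k$ even'' refers to $\lambda_4, \lambda_6, \dots$; it says nothing about $\lambda_2$. So you cannot conclude that $g_2 = 0$ --- in fact $g_2 \neq 0$ for the squared ReLU, and this is precisely the term that does the work in the paper's bound. Because you dropped $g_2$, your argument gives $\gamma_1(h) \lesssim 1$ and hence $\|f_1\|_{\S_1} \lesssim d^{-1}$, which is too strong and masks why the $d^{-1}$ prefactor in $f_1 = d^{-1}\sigma(\langle h, \cdot\rangle)$ is there at all.

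The missing piece is bounding $\gamma_1(g_2)$, and this is where the paper spends its effort. Writing out the degree-$2$ Legendre projection $g_2(x) = N(d,2)\int g(y)\frac{(d+1)\langle x,y\rangle^2 - 1}{d}\,\kappa(dy)$, one must express both the constant part and the quadratic integral $\int g(y)\langle x,y\rangle^2\,\kappa(dy)$ as explicit squared-ReLU networks (using $\sigma(z)^2 + \sigma(-z)^2 = z^2$ as you anticipated for $g_0$), and after tracking the constants this yields $\gamma_1(g_2) \lesssim d$, not $\lesssim 1$. Combined with $\gamma_1(g) \leq 1$ and $\gamma_1(g_0) \leq 1$, this gives $\gamma_1(h) \lesssim d$, and only then does the $d^{-1}$ prefactor bring $\|f_1\|_{\S_1}$ down to $\lesssim 1$. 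Your steps (1)--(2) reducing to a bound on $\gamma_1(h)$ via a single atom and homogeneity are correct and match the paper, but the heart of the lemma is the $\gamma_1(g_2) \lesssim d$ estimate, which your proposal skips. (Also, the bias-term trick you mention as an alternative for $g_0$ is not available here: the separation proofs are set up with spherical inputs and no $[x,R]$ augmentation, so the constant must be represented via the $\sigma(z)^2 + \sigma(-z)^2 = \|x\|^2 = 1$ identity as the paper does.)
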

\begin{proof}
    It suffices to bound $\gamma_1(h)$, remembering that our test functions are defined using networks with the squared ReLU activation.  Clearly $\gamma_1(g) \leq 1$ as it itself a single neuron.  For the other terms, we can write the harmonics explicitly, using the fact that $P_0(t) = 1$ and $P_2(t) = \frac{(d+1)t^2 - 1}{d}$.  Starting with the constant term $g_0$:
    
    \begin{align*}
        g_0(x) &= \int_{\mathbb{S}^d} g(y) \kappa(dy)\\
        & = \int_{\mathbb{S}^d} \sigma(\langle w, y \rangle)^2 \kappa(dy)\\
        & = \int_{\mathbb{S}^d} \sigma(y_1)^2 \kappa(dy)\\
        & = \frac{1}{2(d+1)}
    \end{align*}
    
    Note that $\sigma(z)^2 + \sigma(-z)^2 = z^2$, so we can represent a constant function as a neural network via:
    
    \begin{align*}
        \sum_{i=1}^{d+1} \sigma(\langle e_i, x\rangle)^2 + \sigma(\langle -e_i, x\rangle)^2 & = \sum_{i=1}^{d+1} \langle e_i, x \rangle^2\\
        & = \|x\|_2 = 1
    \end{align*}
    
    So we have $\gamma_1(g_0) \leq 1$.
    
    The second spherical harmonic is given as:
    
    \begin{align*}
        g_2(x) &= N(d,2) \int_{\mathbb{S}^d} g(y) \frac{(d+1)\langle x, y \rangle^2 - 1}{d} \kappa(dy)\\
        &= \frac{N(d,2)}{d} \left((d+1)\int_{\mathbb{S}^d} g(y) \langle x, y \rangle^2 \kappa(dy) - \int_{\mathbb{S}^d} g(y) \kappa(dy) \right)
    \end{align*}
    
    We can represent the constant term as above, and the first integral as
    
    \begin{align*}
        \int_{\mathbb{S}^d} \sigma(\langle w, y \rangle)^2 \langle x, y \rangle^2 \kappa(dy) & = \int_{\mathbb{S}^d} \sigma(\langle w, y \rangle)^2 (\sigma(\langle x, y \rangle)^2 + \sigma(\langle x, -y \rangle)^2) \kappa(dy) \\
        & = \int_{\mathbb{S}^d} \sigma(\langle x, y \rangle)^2 (\sigma(\langle w, y \rangle)^2 + \sigma(\langle w, -y \rangle)^2 ) \kappa(dy) \\
        & = \int_{\mathbb{S}^d} \sigma(\langle x, y \rangle)^2 \langle w, y \rangle^2 \kappa(dy)
    \end{align*}
    
    This last line is a convex neural network representation using the squared ReLU activation, and thus we have $\gamma_1\left(\int_{\mathbb{S}^d} g(y) \langle x, y \rangle^2 \kappa(dy)\right) \leq \int_{\mathbb{S}^d} \langle w, y \rangle^2 \kappa(dy) = \frac{1}{d+1}$.
    
    Thus, $\gamma_1(g_2) \leq \frac{N(d,2)}{d} \left(1 + 1\right) \lesssim d$.  And all together, $\gamma_1(h) \leq \gamma_1(g) + \gamma_1(g_0) + \gamma(g_2) \lesssim d$.
    
    So by homogeniety the bound on $\|f\|_{\S_1}$ follows.
    
\end{proof}

Our choice of $f_1$ induces a separation between $S_1$ and $S_2$.

\begin{theorem}\label{thm:s1s2}
    We have that $\|f_1\|_{\S_1} \lesssim 1$, and 
    \begin{equation}
        \inf_{\|f\|_{\S_2} \leq \delta} \|f - f_1\|_\infty \gtrsim |d^{-11} - d^{-d/3}\delta|
    \end{equation}
\end{theorem}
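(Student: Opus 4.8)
The bound $\|f_1\|_{\S_1}\lesssim 1$ is the preceding lemma, so I focus on the separation, and the plan is to probe $f_1$ and an arbitrary competitor $f\in\S_2$ at a single well-chosen pair of probability measures, reducing everything to Lemmas \ref{lem:correlation-lower-bound} and \ref{lem:correlation-upper-bound}. The idea is to perturb the uniform measure $\kappa$ along the density $\tilde{g}$: since $\int\tilde{g}\,d\kappa=0$ (as $\tilde{g}$ has only harmonics of degree $\ge d^2$), for $t\le t_0:=\|\tilde{g}\|_\infty^{-1}$ the densities $1\pm t\tilde{g}$ define probability measures $\mu_t^\pm\in\mathcal{P}(\mathbb{S}^d)$, and one has the elementary identity $\tilde{g}\,\kappa=\frac{1}{2t}(\mu_t^+-\mu_t^-)$ --- a signed measure realized as a scaled difference of two probability measures (this is the ``decomposition of signed measures into probability measures'' the main text alludes to). I would then lower bound the sup-norm through $2\|f-f_1\|_\infty\ge\bigl|(f_1(\mu_t^+)-f_1(\mu_t^-))-(f(\mu_t^+)-f(\mu_t^-))\bigr|$.

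The $f_1$ term is handled by exact linearization. Since $h=g-g_0-g_2$ is the odd part of $g$, it has vanishing mean, so $\langle h,\mu_t^\pm\rangle=\pm t\langle h,\tilde{g}\rangle$, and since the removed harmonics $g_0,g_2$ are orthogonal to the purely high-degree $\tilde{g}$, we get $\langle h,\tilde{g}\rangle=\langle g,\tilde{g}\rangle$. Using $\sigma(c)-\sigma(-c)=c$ the outer ReLU in $f_1$ cancels and $f_1(\mu_t^+)-f_1(\mu_t^-)=d^{-1}t\langle g,\tilde{g}\rangle$, which by Lemma \ref{lem:correlation-lower-bound} is $\gtrsim d^{-1}t\,d^{-21/2}$. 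For the competitor I would only use that the outer ReLU is $1$-Lipschitz together with $\mu_t^+-\mu_t^-=2t\,\tilde{g}\,\kappa$: for $f(\mu)=\int_{\A_2}\sigma(\langle\phi,\mu\rangle)\chi(d\phi)$ with $\|\chi\|_{\mathrm{TV}}\le\delta$ this yields $|f(\mu_t^+)-f(\mu_t^-)|\le 2t\delta\sup_{\gamma_2(\phi)\le1}|\langle\phi,\tilde{g}\rangle|\le 2t\delta\,d^{1/2-d/3}$ by Lemma \ref{lem:correlation-upper-bound} (applied with its parameter equal to $1$). Combining, $2\|f-f_1\|_\infty\gtrsim t\bigl(d^{-23/2}-2\delta\,d^{1/2-d/3}\bigr)$ uniformly over $\|f\|_{\S_2}\le\delta$; taking $t=t_0$ and inserting the polynomial-in-$d$ estimate of $\|\tilde{g}\|_\infty$ (from the harmonic tail of $g$ and the growth of $N(d,k)$) turns this into the stated $\gtrsim|d^{-11}-d^{-d/3}\delta|$ after chasing constants.

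The reason this route works is that the outer ReLU --- which at first glance threatens to let $\S_2$ functions inject uncontrolled high-frequency content and thereby match $\tilde{g}$ --- is \emph{not} the obstacle: the vanishing mean of $h$ makes $f_1$ exactly affine along the pencil, and $1$-Lipschitzness of the ReLU collapses the variation of any $f\in\S_2$ to the purely \emph{linear} correlation $\sup_{\phi\in\A_2}|\langle\phi,\tilde{g}\rangle|$, which is precisely the quantity Lemma \ref{lem:correlation-upper-bound} shows is exponentially small because $\phi$'s high-degree harmonics are dominated by the tiny kernel eigenvalues $\lambda_k$. Consequently the only genuinely delicate part I anticipate is the polynomial-in-$d$ bookkeeping: a careful bound on $\|\tilde{g}\|_\infty$ (equivalently, on how far $\kappa$ can be perturbed while staying a probability measure) together with the constants hidden inside Lemmas \ref{lem:correlation-lower-bound} and \ref{lem:correlation-upper-bound}, all of which must be tracked to land on the precise exponents $-11$ and $-d/3$. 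The absolute value in the statement is cosmetic --- the bound is informative exactly in the regime $\delta\lesssim d^{-11+d/3}$ where the right-hand side is positive --- and the construction mirrors under $\tilde{g}\mapsto-\tilde{g}$ to cover the sign of the correlation.
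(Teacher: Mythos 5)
Your argument follows the same high-level strategy as the paper's — probe $f_1$ and a competitor $f\in\S_2$ at a pair of carefully chosen probability measures whose difference is proportional to $\tilde g\,\kappa$, then apply Lemma~\ref{lem:correlation-lower-bound} (for the $f_1$ term) and Lemma~\ref{lem:correlation-upper-bound} together with $1$-Lipschitzness of the outer ReLU (for the competitor). The linearization of $f_1$ via $\langle h,1\rangle=0$, $\langle h,\tilde g\rangle=\langle g,\tilde g\rangle$ and $\sigma(c)-\sigma(-c)=c$ is correct, as is the competitor bound. However, the specific choice of the measure pair is genuinely different, and that is precisely where a gap opens up.

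You take $\mu^\pm_t$ with densities $1\pm t\tilde g$, forcing $t\le\|\tilde g\|_\infty^{-1}$, so your final lower bound carries a prefactor $\|\tilde g\|_\infty^{-1}$. You then assert that a ``polynomial-in-$d$ estimate of $\|\tilde g\|_\infty$'' will finish the argument, citing ``the harmonic tail of $g$ and the growth of $N(d,k)$.'' This step fails. The term-by-term bound $\|\tilde g\|_\infty\le\sum_{k\ge d^2}\|g_k\|_\infty$ diverges: each harmonic obeys $\|g_k\|_\infty\le\sqrt{N(d,k)}\,\|g_k\|_{L_2}$, and plugging in $\|g_k\|_{L_2}^2=\lambda_k^2 N(d,k)\simeq d^{3/2}(k(k+d))^{-7/2}$ and the Stirling form of $N(d,k)$ gives, for $k\gg d$, $\|g_k\|_\infty\lesssim d^{1-d/2}e^{d/2}k^{(d-7)/2}$, which grows with $k$ whenever $d\ge 8$. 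Alternatively, writing $\tilde g=g-P_{<d^2}g$ and invoking the Lebesgue constant of the degree-$K$ partial-sum operator on $\mathbb{S}^d$, which scales like $K^{(d-1)/2}$, yields $\|\tilde g\|_\infty\lesssim (d^2)^{(d-1)/2}=d^{d-1}$ — super-polynomial, and Jackson-type smoothness improvements for the $C^{1,1}$ function $\sigma(\cdot)^2$ only shave off a fixed power. So with your construction you cannot recover the stated $d^{-11}$ without a substantially harder (and unproven) $L_\infty$ estimate.

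The paper avoids this entirely by not perturbing $\kappa$: it takes $\nu^\pm$ to be the Jordan decomposition of the signed measure $\nu=\frac{2\tilde g}{\|\tilde g\|_{L_1}}\kappa$, using oddness of $\tilde g$ to show each half has total variation $1$ and hence is a probability measure. The resulting amplitude is $\|\tilde g\|_{L_1}^{-1}$ rather than $\|\tilde g\|_\infty^{-1}$, and $\|\tilde g\|_{L_1}\le\|\tilde g\|_{L_2}\le\|g\|_{L_2}\lesssim d^{-1/2}$ follows immediately from Jensen and orthogonality. Since $\|\tilde g\|_{L_1}\le\|\tilde g\|_\infty$, the paper's amplitude is always at least as large as yours, and in this regime it is much larger. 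The conceptual lesson is that the ``decomposition of signed measures into probability measures'' alluded to in the text is the full Jordan decomposition (putting all mass on $\{\tilde g>0\}$ and $\{\tilde g<0\}$), not a small affine perturbation of the base measure — the latter is throttled by the $L_\infty$ norm of the high-frequency tail, which is not under control.
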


\begin{proof}

    Because we've subtracted out the $0$th and $2$nd harmonics, and all other even harmonics are zero, $\tilde{g}$ and $h$ are odd functions.
    
    Consider the signed measure $\nu(dx) := \frac{2\tilde{g}(x)}{\|\tilde{g}\|_{L_1}} \kappa(dx)$, with Jordan decomposition $\nu = \nu^+ - \nu^-$ with the positive measures $\nu^+(dx) := \frac{2\sigma(\tilde{g}(x))}{\|\tilde{g}\|_{L_1}} \kappa(dx)$ and $\nu^-(dx) := \frac{2\sigma(-\tilde{g}(x))}{\|\tilde{g}\|_{L_1}} \kappa(dx)$.
    
    Note that from the oddness of $\tilde{g}$ and symmetry of $\kappa$:
    
    \begin{align*}
        TV(\nu^-) & = \frac{2}{\|\tilde{g}\|_{L_1}}\int_{\mathbb{S}^d} \sigma(-\tilde{g}(x)) \kappa(dx)\\
        & = \frac{2}{\|\tilde{g}\|_{L_1}}\int_{\mathbb{S}^d} \sigma(\tilde{g}(-x)) \kappa(dx)\\
        & = \frac{2}{\|\tilde{g}\|_{L_1}}\int_{\mathbb{S}^d} \sigma(\tilde{g}(x)) \kappa(dx)\\
        & = TV(\nu^+)
    \end{align*}
    
    Because $TV(\nu^+) + TV(\nu^-) = TV(\nu) = 2$, we conclude $\nu^+$ and $\nu^-$ are both probability measures.  We'll use these measures to separate $f$ and $f_1$.  By Lipschitz continuity of $\sigma$:
    
    \begin{align*}
        |f(\nu^+) - f(\nu^-)| & = \left|\int_{\mathbb{S}^d} \sigma(\langle \phi, \nu^+ \rangle) - \sigma(\langle \phi, \nu^- \rangle) \chi(d \phi) \right|\\
        & \leq \int_{\mathbb{S}^d} |\sigma(\langle \phi, \nu + \nu^- \rangle) - \sigma(\langle \phi, \nu^- \rangle)| \chi(d \phi)\\
        & \leq \sup_{\gamma_2(\phi) \leq 1} |\langle \phi, \nu \rangle| \|f\|_{\S_2} \\
        & \leq \frac{2}{\|\tilde{g}\|_{L_1}} \sup_{\gamma_2(\phi) \leq 1} |\langle \phi, \tilde{g} \rangle| \|f\|_{\S_2} \\
        & \lesssim \frac{2}{\|\tilde{g}\|_{L_1}} d^{1/2-d/3} \delta
    \end{align*}
    
    where in the last line we use Lemma~\ref{lem:correlation-upper-bound}.
    
    Concerning the function $f_1$, we first use oddness again to notice:
    
    \begin{align*}
        \langle h, \nu^- \rangle & = \frac{2}{\|\tilde{g}\|_{L_1}} \int_{\mathbb{S}^d} h(x) \sigma(-\tilde{g}(x)) \kappa(dx)\\
        & = \frac{2}{\|\tilde{g}\|_{L_1}} \int_{\mathbb{S}^d} h(x) \sigma(\tilde{g}(-x)) \kappa(dx)\\
        & = \frac{2}{\|\tilde{g}\|_{L_1}} \int_{\mathbb{S}^d} h(-x) \sigma(\tilde{g}(x)) \kappa(dx)\\
        & = -\langle h, \nu^+ \rangle
    \end{align*}
    
    So $\langle h, \nu \rangle = \langle h, \nu^+ - \nu^-\rangle = 2 \langle h, \nu^+ \rangle$, and therefore from Lemma~\ref{lem:correlation-lower-bound} with $\alpha = 2$,
    
    \begin{align*}
    d^{-21/2} \lesssim \langle g, \tilde{g} \rangle & = \langle h, \tilde{g} \rangle\\
    & = \frac{\|\tilde{g}\|_{L_1}}{2} \langle h, \nu \rangle\\
    & = \|\tilde{g}\|_{L_1} \langle h, \nu^+ \rangle
    \end{align*}
    
    So $\langle h, \nu^+ \rangle \gtrsim \frac{d^{-21/2}}{\|\tilde{g}\|_{L_1}} $, and we conclude
    
    \begin{align*}
        |f_1(\nu^+) - f_1(\nu^-)| & = d^{-1}|\sigma(\langle h, \nu^+ \rangle) - \sigma(\langle h, \nu^- \rangle)|\\
        &= d^{-1}\sigma(\langle h, \nu^+\rangle)\\
        & \gtrsim \frac{d^{-23/2}}{\|\tilde{g}\|_{L_1}}
    \end{align*}
    
    Now, suppose $\|f-f_1\|_\infty \leq \epsilon$.  Then 
    \begin{align*}
        \frac{d^{-23/2}}{\|\tilde{g}\|_{L_1}} & \lesssim |f_1(\nu^+) - f_1(\nu^-)|\\
        & \leq |f_1(\nu^+) - f(\nu^+)| + |f(\nu^+) - f(\nu^-)| + |f(\nu^-) - f_1(\nu^-)|\\
        & \lesssim \epsilon + \frac{2}{\|\tilde{g}\|_{L_1}} d^{1/2-d/3} \delta + \epsilon
    \end{align*}
    
    So for sufficiently large $d$, we have $\frac{|d^{-23/2} - d^{1/2-d/3}\delta|}{\|\tilde{g}\|_{L_1}} \lesssim \epsilon$.  Finally, note by Jensen's inequality and spherical harmonic orthogonality that $\|\tilde{g}\|_{L_1} \leq \|\tilde{g}\|_{L_2} \leq \|g\|_{L_2} \lesssim d^{-1/2}$.
    
\end{proof}

\subsubsection{Separation of $\S_2$ and $\S_3$}

In order to instantiate the class $\S_3$, we must fix $\tau$, the base probability measure over test functions in $\A_2$. Consider some probability distribution $\zeta$ over the square-summable sequences $l_2(\mathbb{R^+})$ such that for $c \in supp(\zeta)$, $\sum_{k=0}^\infty c_k^2 = 1$.  Furthermore, we will make the simplyfing assumption that $c_0 = 0$.  For each $k$ let $\kappa_k$ be uniform over $\mathbb{S}^{N(d,k)-1}$, and note that $N(d,1) = d+1$ so $\kappa = \kappa_1$.

Then we sample $\phi \sim \tau$ as $\phi = \sum_{k=1}^\infty \sum_{j=0}^{N(d,k)} \lambda_k c_k \alpha_{kj} Y_{kj}$ where $c \sim \zeta$ and $\alpha_k \sim \kappa_k$.  Observe that $$\gamma_2(\phi)^2 = \sum_{k=1, \lambda_k \neq 0}^\infty \sum_{j=1}^{N(d,k)} \lambda_k^{-2} \lambda_k^2 c_k^2 \alpha_{kj}^2 = 1$$ so $\tau$ indeed samples functions from $\A_2$.

We define $f_2(\mu) = \sigma(\langle g, \mu \rangle)$ where $g = \lambda_{1} Y_{1,1}$.  Clearly $\gamma_2(g)^2 = \lambda_{1}^{-2} \lambda_{1}^2 \|Y_{1,1}\|_{L_2}^2 = 1$, so $\|f_2\|_{\S_2} \leq 1$.

\begin{theorem}\label{thm:s2s3}
    We have that $\|f_2\|_{\S_2} \leq 1$, and 
    \begin{equation}
        \inf_{\|f\|_{\S_3} \leq \delta} \|f - f_2\|_\infty \gtrsim d^{-2} \delta^{-5/d}
    \end{equation}
\end{theorem}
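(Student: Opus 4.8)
The plan is to reduce the claim to a classical lower bound for RKHS approximation on the sphere, in the spirit of both the $\FF_1$--$\FF_2$ separation of~\parencite{bach2017breaking} and the proof of Theorem~\ref{thm:s1s2}, but---since the claimed bound is a power of $\delta$---driven by the full \emph{spectrum} of the relevant kernel rather than by a finite-point argument. First I restrict all inputs to Dirac measures: for $z \in \mathbb{S}^d$ write $\delta_z \in \PI$ for the point mass at $z$, and let $\psi(z) := f_2(\delta_z) = \sigma(g(z)) = \lambda_1\sqrt{d+1}\,\sigma(\langle z, e\rangle)$, where $e$ is chosen so that $Y_{1,1} = \sqrt{d+1}\,\langle \cdot, e\rangle$; thus $\psi$ is, up to the scalar $\lambda_1\sqrt{d+1} \simeq d^{-1/2}$, a single standard ReLU neuron on $\mathbb{S}^d$. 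Since the restriction of a reproducing-kernel Hilbert space function to a subset has no larger norm, every $f \in \S_3$ with $\|f\|_{\S_3} \leq \delta$ restricts to a function of norm $\le \delta$ in the RKHS $\mathcal{H}$ on $\mathbb{S}^d$ with reproducing kernel $k_0(z,z') := k_{\mathcal G}(\delta_z,\delta_{z'}) = \int_{\A_2}\sigma(\phi(z))\sigma(\phi(z'))\,\tau(d\phi)$, while $\|f - f_2\|_\infty \ge \sup_z|f(\delta_z) - \psi(z)|$. Hence it suffices to lower bound $\inf_{\|h\|_{\mathcal H} \le \delta}\|h - \psi\|_{L_\infty(\mathbb{S}^d)}$.

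Second I diagonalise $k_0$ in spherical harmonics. Because each $\kappa_k$ is uniform on $\mathbb{S}^{N(d,k)-1}$ and rotations of $\mathbb{S}^d$ act orthogonally within every harmonic subspace, the law of $\phi \sim \tau$ is rotation invariant; hence $k_0$ is zonal, $k_0(z,z') = \sum_k \vartheta_k N(d,k) P_k(\langle z,z'\rangle)$, with Mercer eigenvalue $\vartheta_k = N(d,k)^{-1}\int_{\A_2}\|(\sigma\circ\phi)_k\|_{L_2}^2\,\tau(d\phi)$ on degree $k$ and $\|h\|_{\mathcal H}^2 = \sum_k \vartheta_k^{-1}\|h_k\|_{L_2}^2$. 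The crucial point is that $\vartheta_k$ decays far faster than the harmonic energy of $\psi$: since $\|(\sigma\circ\phi)_k\|_{L_2}^2 \le \|\sigma\circ\phi\|_{L_2}^2 \le \|\phi\|_{L_2}^2 = \sum_{k'}\lambda_{k'}^2 c_{k'}^2 \le \max_{k'}\lambda_{k'}^2$ (using $\sum_{k'}c_{k'}^2 = 1$ on $\supp\zeta$, and $\max_{k'}\lambda_{k'}^2 \simeq d^{-2}$), one gets $\vartheta_k \lesssim d^{-2}/N(d,k)$, which by $N(d,k) \simeq k^{\,d-1}/d^{\,d-1/2}$ for $k \gg d$ is super-polynomially small in $k$ with exponent growing like $d$. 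On the other side, $\psi$ being a standard ReLU neuron has $\|\psi_k\|_{L_2}^2 \simeq c_d\, k^{-5}$ for even $k \gg d$ (nonzero for infinitely many $k$), where $c_d$ and the exponent $5$ are the $\alpha = 1$ analogues of the squared-ReLU ($\alpha = 2$) coefficient identities $\|g_k\|_{L_2}^2 = \lambda_k^2 N(d,k)$ used above; in particular $\sum_k \vartheta_k^{-1}\|\psi_k\|^2 \gtrsim d^{2} c_d \sum_{k \gg d} k^{-5} N(d,k) = \infty$ for $d > 6$, so $\psi \notin \mathcal H$.

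Third I combine: for $\|h\|_{\mathcal H} \le \delta$ and any $k$ with $\vartheta_k > 0$ one has $\|h_k\|_{L_2} \le \delta\sqrt{\vartheta_k}$ (and $h_k = 0$ whenever $\vartheta_k = 0$, which only helps), so using $\|\cdot\|_{L_\infty} \ge \|\cdot\|_{L_2} \ge \|(\cdot)_k\|_{L_2}$,
\[ \|h - \psi\|_\infty \;\ge\; \|\psi_k\|_{L_2} - \delta\sqrt{\vartheta_k} \;\gtrsim\; \sqrt{c_d}\,k^{-5/2} - \delta\, d^{\,(d-5/2)/2}\, k^{-(d-1)/2}. \]
Taking $k$ to be the smallest even frequency at which the second term falls below half the first --- solving $k^{(d-6)/2} \gtrsim \delta\,\mathrm{poly}(d)$ gives $k \simeq d\,\delta^{2/d}$ up to lower-order factors --- leaves a lower bound of order $\sqrt{c_d}\,k^{-5/2}$, which after tracking the exponents and the $d$-dependent constants is the claimed $d^{-2}\delta^{-5/d}$. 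The main obstacle is exactly this last part: one must carry out the $\alpha = 1$ spherical-harmonic coefficient computation for $\psi$ (the lengthy analogue of the $\lambda_k$ calculation in the Appendix) and push all $d$-dependent constants --- $\max_{k'}\lambda_{k'}^2$, $c_d$, the precise form of $N(d,k)$ for $k\gg d$ --- through the cutoff optimisation accurately enough to land exactly on $d^{-2}\delta^{-5/d}$; a sharper bound on $\vartheta_k$ (via the high-degree harmonics of $|\phi|$ for $\phi \in \A_2$) would only improve constants. A more measure-theoretic route, closer to the proof of Theorem~\ref{thm:s1s2}, would instead build for each frequency cutoff $K$ a pair of probability measures $\nu_K^\pm$ with $|f_2(\nu_K^+) - f_2(\nu_K^-)|$ polynomially large but $\|k_{\mathcal G}(\nu_K^+,\cdot) - k_{\mathcal G}(\nu_K^-,\cdot)\|_{\mathcal H}^2 = \int_{\A_2}\big(\sigma\langle\phi,\nu_K^+\rangle - \sigma\langle\phi,\nu_K^-\rangle\big)^2\,\tau(d\phi)$ exponentially small in $d$, then optimise $K$ against $\delta$; it reduces to the same estimates.
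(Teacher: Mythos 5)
Your route is genuinely different from the paper's. The paper does \emph{not} restrict to Dirac measures; it constructs the family of smooth-density probability measures $\mu_\beta^*(dx) \propto (h(x) + \|h\|_\infty)\,\kappa(dx)$ where $h = \sum_j \beta_{1,j}Y_{1,j}$ is a pure degree-$1$ spherical harmonic. Because $c_0 = 0$ on $\supp\zeta$, for every $\phi$ in the support of $\tau$ the pairing $\langle \phi, \mu_\beta^*\rangle$ depends \emph{only} on the degree-$1$ coefficients $\alpha_1$ of $\phi$ (all other degrees integrate to zero against the density). After positive homogeneity of $\sigma$ and marginalizing, both $f_2(\mu_\beta^*)$ and any $f\in\S_3$ evaluated at $\mu_\beta^*$ become functions of $\beta \in \mathbb{S}^d$: the former is a single ReLU neuron $\sigma(\langle e_1,\beta\rangle)$ and the latter is an RKHS function with density $\tilde q$ over the ordinary ReLU kernel on $\mathbb{S}^d$. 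This lands exactly on Lemma D.5 of \textcite{bach2017breaking}, which supplies the $\delta^{-5/d}$ rate outright; the prefactor $\lambda_1/\|h+\|h\|_\infty\|_{L_1} \gtrsim d^{-2}$ then comes from two elementary estimates. No fresh spectral computation is performed.

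Your Dirac-restriction argument is sound as far as it goes: the restriction map is norm-nonincreasing, the induced kernel $k_0(z,z')=\int_{\A_{2}}\sigma(\phi(z))\sigma(\phi(z'))\,\tau(d\phi)$ is zonal by rotation-invariance of $\tau$, and the single-mode inequality $\|h-\psi\|_\infty \geq \|\psi_k\|_{L_2} - \delta\sqrt{\vartheta_k}$ correctly isolates the $\delta^{-5/d}$ rate after cutoff optimization. But it swaps the paper's clean reduction to an existing lemma for a new Mercer estimate of a kernel whose feature map is $\sigma\circ\phi$ --- a ReLU of a squared-ReLU network, not a ReLU of a linear form --- and this is where the acknowledged gap becomes substantive. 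Tracing your own exponents, the cutoff $k\simeq d\,\delta^{2/d}$ produces a lower bound of order $\sqrt{c_d}\,d^{-5/2}\delta^{-5/d}$, so matching the stated $d^{-2}$ requires $c_d\simeq d$, and there is no reason given for that. The $\alpha=2$ analogue in the appendix yields $\lambda_k^2 N(d,k)\simeq d^{3/2}k^{-7}$, not $d^{3}k^{-7}$, so the corresponding $\alpha=1$ constant could equally well leave you with a genuinely weaker prefactor (e.g.\ $d^{-9/4}$). In short: you recover the correct $\delta$-dependence but have not established the $d$-dependence, and the paper's $\mu_\beta^*$ construction --- which linearizes $\langle\phi,\mu\rangle$ in the degree-$1$ coefficients so that Bach's D.5 applies verbatim with explicit constants --- is precisely the idea that sidesteps this computation.
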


\begin{proof}
    Consider the function $h(x) = \sum_{j=1}^{N(d,1)} \beta_{1,j} Y_{1,j}$ and probability measure $\mu_\beta^*(dx) = \frac{h(x) + \|h\|_\infty}{\|h + \|h\|_\infty\|_{L_1}} \kappa(dx)$. Observe that
    
    \begin{align*}
        f_2(\mu_\beta^*) = \frac{\lambda_{1}}{\|h + \|h\|_\infty\|_{L_1}} \sigma(\langle e_1, \beta \rangle)
    \end{align*}
    
    For a function $f \in \S_3$ with density $q$ with respect to $\tau$, we have:
    
    \begin{align*}
        f(\mu_\beta^*) &= \int_{\A_2} \sigma(\langle \phi, \mu_\beta^* \rangle) q(\phi) \tau(d\phi) \\
        & = \frac{\lambda_{1}}{\|h + \|h\|_\infty\|_{L_1}} \int_{l_2(\mathbb{R^+})} \int_{\mathbb{S}^d} \sigma(\langle c_1 \alpha_1, \beta \rangle) \hat{q}(c, \alpha_1) \kappa(d \alpha_1) \zeta(d c)\\
        & = \frac{\lambda_{1}}{\|h + \|h\|_\infty\|_{L_1}} \int_{\mathbb{S}^d} \sigma(\langle \alpha_1, \beta \rangle) \left[ \int_{l_2(\mathbb{R^+})} c_1 \hat{q}(c, \alpha_1) \zeta(d c) \right] \kappa(d \alpha_1) \\
    \end{align*}
    
    where $\hat{q}$ marginalizes out all other $\alpha_k$ terms.  Let $\tilde{q}(\alpha_1) = \int_{l_2(\mathbb{R^+})} c_1 \hat{q}(c, \alpha_1) \zeta(d c)$.  From the fact that $c_1 \leq 1$, and by Jensen's inequality, $\|\tilde{q}\|_{L_2(\kappa)} \leq \|\hat{q}\|_{L_2(\kappa \times \zeta)} \leq \|q\|_{L_2(\tau)}$.
    
    Now we may appeal to a separation of test function representations acting on spherical inputs.  From D.5 in~\cite{bach2017breaking}, there exists some $\beta \in \mathbb{S}^d$ such that \begin{align*}
        |\sigma(\langle e_1, \beta \rangle) - \int_{\mathbb{S}^d} \sigma(\alpha_1, \beta) \tilde{q}(\alpha_1) \kappa(d\alpha_1)\| & \gtrsim \|\tilde{q}\|_{L_2}^{-5/d} \geq \|q\|_{L_2}^{-5/d}
    \end{align*}
    
    Therefore
    
    \begin{align*}
        |f_2(\mu_\beta^*) - f(\mu_\beta^*)| \gtrsim  \frac{\lambda_{1}}{\|h + \|h\|_\infty\|_{L_1}}\|q\|_{L_2}^{-5/d}
    \end{align*}
    
    Finally, note that $\lambda_1 \simeq d^{-1}$, and by the addition formula and the fact $P_k(1) = 1$ for all $k$:
    
    \begin{align*}
        \|h + \|h\|_\infty\|_{L_1} & \leq 2 \|h\|_\infty\\
        & = 2 \max_{x \in \mathbb{S}^d} \sum_{j=1}^{N(d,1)} \beta_{1,j} Y_{1,j}(x)\\
        & \leq 2 \max_{x \in \mathbb{S}^d} \|\beta\|_2 \sqrt{\sum_{j=1}^{N(d,1)} Y_{1,j}(x)^2}\\
        & \leq 2 N(d,1)\\
        & \lesssim d
    \end{align*}
    
    So we arrive at the desired bound.

\end{proof}
\section{Experimental Details and Additional Data}

\paragraph{Synthetic Details:}

For all experiments we use the same architecture.  Namely, for an input set $x = (x_1, \dots, x_N)$, the network is defined as $f_N(x) = w_3^T\sigma(W_2 \frac{1}{N}\sum_{i=1}^N \sigma(W_1 \tilde{x}_i))$, where we choose the architecture as $W_1 \in \mathbb{R}^{h_1 \times d}$, $W_2 \in \mathbb{R}^{h_2 \times h_1}$, and $w_3 \in \mathbb{R}^{h_2}$.  Here, $h_1, h_2 = 100$ for $\S_1$, $h_1 = 100$ and $h_2 = 1000$ for $\S_2$, and $h_1 = h_2 = 1000$ for $\S_3$.  The weights are initialized with the uniform Kaiming initialization~\parencite{he2015delving} and frozen as described in Table~\ref{tab:freeze}.

We relax the functional norm constraints to penalties, by introducing regularizers of the form $\lambda \|f_N\|_{\S_i}$ for $\lambda$ a hyperparameter.  Let $K(\cdot)$ map a matrix to the vector of row-wise squared norms, and let $|\cdot|$ denote the element-wise absolute value of a matrix.  Then we calculate the functional norms via the path norm as follows:
\begin{itemize}
    \item For $\S_1$, $\|f_N\|_{\S_1} = |w_3|^T|W_2|K(W_1)$
    \item For $\S_2$, we explicitly normalize the frozen matrix $W_1$ to have all row-wise norms equal to 1, then $\|f_N\|_{\S_2} = |w_3|^TK(W_2)$
    \item For $\S_3$, we normalize the rows of $W_1$ and $W_2$, which simply implies $\|f_N\|_{\S_3} = \|w_3\|_2$
\end{itemize}

We optimized via Adam~\parencite{kingma2014adam} with an initial learning rate of 0.0005, for 5000 iterations.  Under this architecture, all $\S_1$, $\S_2$ and $\S_3$ functions achieved less than $10^{-15}$ training error without regularization on all objective functions (listed below) on training sets of 100 samples.

We use the following symmetric functions for our experiments:
\begin{itemize}
    \item $f_N^*(x) = \mathrm{max}_i(\|x_i\|_2^{-1})$
    \item $f_N^*(x) = \lambda \log \left(\sum_{i=1}^N \exp(\|x_i\|_2^{-1} / \lambda) \right)$ for $\lambda = 0.1$
    \item $f_N^*(x) = \mathrm{median}(\{\|x_i\|_2^{-1}\}_{i=1}^N)$
    \item $f_N^*(x) = \mathrm{second}_i(\|x_i\|_2^{-1})$ i.e. the second largest value in a given set
    \item $f_N^*(x) = \frac{1}{N} \sum_{i=1}^N (\|x_i\|_2^{-1})$
    \item $f_N^*(x) = \frac{2}{N(N-1)} \sum_{i < j} \frac{1}{\|x_i - x_j\|_2}$
    \item $f_N^*(x)$ is an individual neuron, parameterized the same as $f_N$ but with different hidden layer sizes.  For the neuron, $h_1 = h_2 = 1$, for the smooth\_neuron, $h_1 = 100$ and $h_2 = 1$.  Additionally, the proof of Theorem~\ref{thm:s1s2} dictates that we must choose the neuron's test function to have large kernel norm, so we initialize $W_1$ elementwise from the Gaussian mixture with density $0.5 * \mathcal{N}(1, 0.5) + 0.5 * \mathcal{N}(-1, 0.5)$.
\end{itemize}

Note that in order to guarantee the ``smooth\_neuron" is representable by our finite-width networks, we explicitly set $W_1$ in the $\S_2$ and $\S_3$ models to equal the $W_1$ matrix of the ``smooth\_neuron".

For each model in each experiment, $\lambda$ was determined through cross validation over $\lambda \in [0, 10^{-6}, 10^{-4}, 10^{-2}]$ using fresh samples of training data, and choosing the value of $\lambda$ with lowest generalization error, which was calculated from another 1000 sampled points.

Then, with determined $\lambda$, each model was trained from scratch over 10 runs with independent random initializations.  The mean and standard deviation of the generalization error, testing on varying values of $N$, are plotted in Figure~\ref{fig:plots}.

\paragraph{Application Details:}

For the MNIST experiment with results given in Table~\ref{tab:mnist}, we follow a similar setup to~\textcite{de2019stochastic}.  From an image in $\mathbb{R}^{28 \times 28}$, we produce a point cloud by considering a set of tuples of the form $(r, c, t)$, which are the row, column and intensity respectively for each pixel.  We restrict to pixels where $t > 0.5$, and select the pixels with the top 200 intensities to comprise the point cloud (if there are fewer than 200 pixels remaining after thresholding, we resample among them).  Furthermore, we normalize the row and column values among all the points in the cloud.  This process maps an image to a set $S \subseteq \mathbb{R}^3$ such that $|S| = 200$.  

For this dataset we consider $h_1 = 500$ and $h_2 = 500$ for our $\S_i$ finite-width architectures.


We perform cross-validation by setting aside $10\%$ of the data as a validation set, and calculate the mean and standard deviation of the generalization error over five runs.  In order to study generalization in this setting, we test on point clouds of different size, $100$ and $200$, and show the results in Table~\ref{tab:mnist}.  The starting learning rate is $0.001$.  Otherwise, all other experimental details are the same as above.

\paragraph{Robust Mean Details:}

We use the regular ReLU activation in the first layer for training stability.  Each network is trained on a batch of 5000 input sets sampled as above, as the task of robust estimation appears more susceptible to overfitting than the simpler symmetric objectives learned in the previous section.  All networks are trained for 30000 iterations, and all other details of training are kept consistent with the previous section (including the larger number of random kernel features).

The hyperparameters required for the adversarial estimator in~\cite{diakonikolas2017being} are $\tau$ and ``cher", which both control the thresholding of which vectors are discarded according to the projection on the maximal eigenvector of the empirical covariance.  Cross validation over the sets $[0.1, 0.15, 0.2]$ and $[1.5, 1.8, 2.0, 2.3]$ yielded the choices $\tau = 0.1$ and ``cher" $= 1.5$.

\paragraph{Additional Experiments}

In Figure~\ref{fig:plots_extra} we consider higher dimensional vectors for our set inputs to the symmetric models.  In Figure~\ref{fig:plots_multi} we consider training over multiple set sizes as well, with the input size sampled uniformly from ${4, 5, 6}$.

\begin{figure}
\centering

\begin{subfigure}{.4\textwidth}
  \centering
  \includegraphics[width=1.\linewidth]{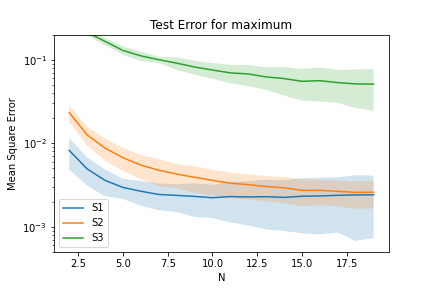}
\end{subfigure}%
\begin{subfigure}{.4\textwidth}
  \centering
  \includegraphics[width=1.\linewidth]{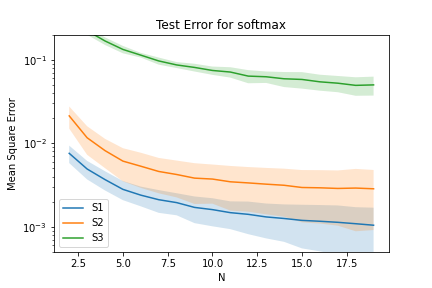}
\end{subfigure}%

\begin{subfigure}{.4\textwidth}
  \centering
  \includegraphics[width=1.\linewidth]{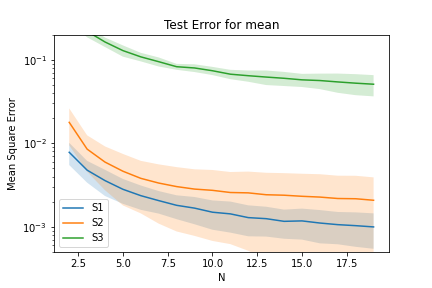}
\end{subfigure}%
\begin{subfigure}{.4\textwidth}
  \centering
  \includegraphics[width=1.\linewidth]{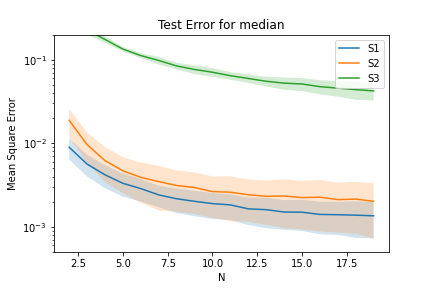}
\end{subfigure}%

\begin{subfigure}{.4\textwidth}
  \centering
  \includegraphics[width=1.\linewidth]{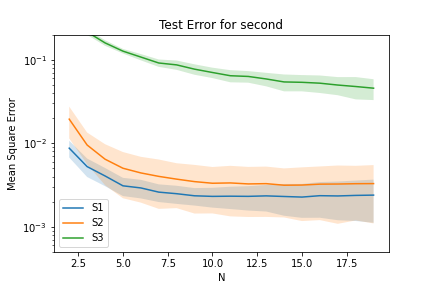}
\end{subfigure}%
\begin{subfigure}{.4\textwidth}
  \centering
  \includegraphics[width=1.\linewidth]{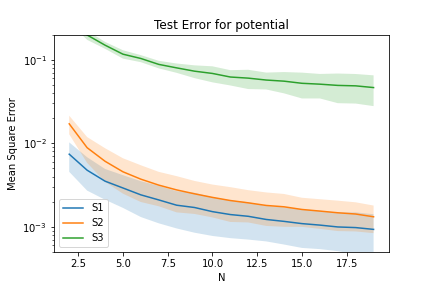}
\end{subfigure}%

\begin{subfigure}{.4\textwidth}
  \centering
  \includegraphics[width=1.\linewidth]{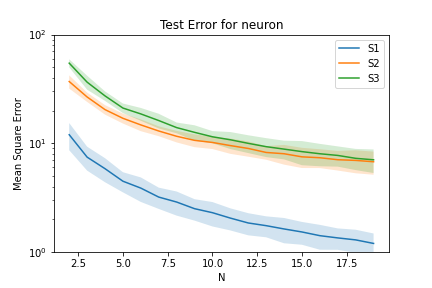}
\end{subfigure}%
\begin{subfigure}{.4\textwidth}
  \centering
  \includegraphics[width=1.\linewidth]{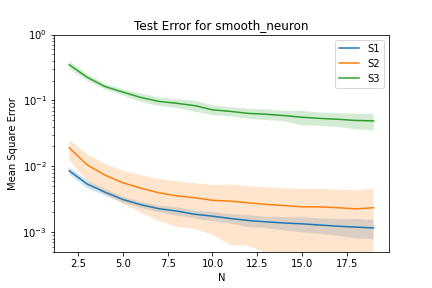}
\end{subfigure}%
\caption{Test Error for $d = 20$}
\label{fig:plots_extra}
\end{figure}

\begin{figure}
\centering

\begin{subfigure}{.4\textwidth}
  \centering
  \includegraphics[width=1.\linewidth]{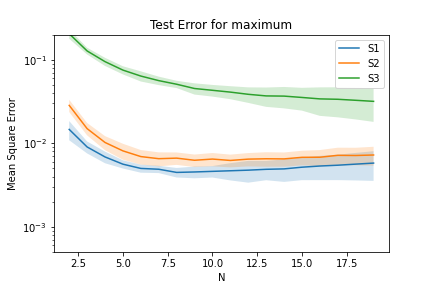}
\end{subfigure}%
\begin{subfigure}{.4\textwidth}
  \centering
  \includegraphics[width=1.\linewidth]{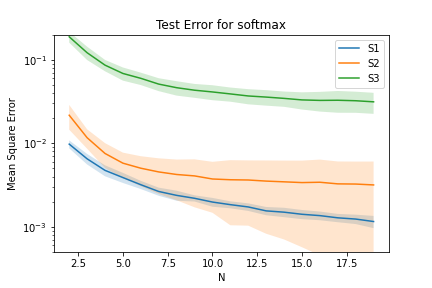}
\end{subfigure}%

\begin{subfigure}{.4\textwidth}
  \centering
  \includegraphics[width=1.\linewidth]{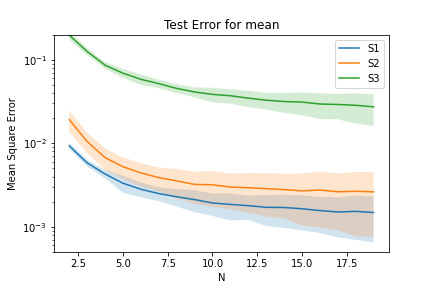}
\end{subfigure}%
\begin{subfigure}{.4\textwidth}
  \centering
  \includegraphics[width=1.\linewidth]{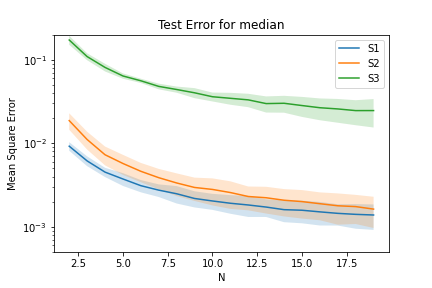}
\end{subfigure}%

\begin{subfigure}{.4\textwidth}
  \centering
  \includegraphics[width=1.\linewidth]{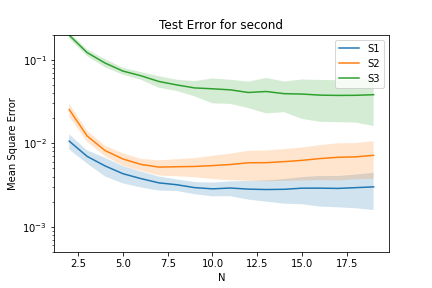}
\end{subfigure}%
\begin{subfigure}{.4\textwidth}
  \centering
  \includegraphics[width=1.\linewidth]{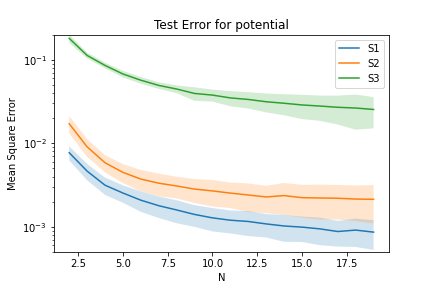}
\end{subfigure}%

\begin{subfigure}{.4\textwidth}
  \centering
  \includegraphics[width=1.\linewidth]{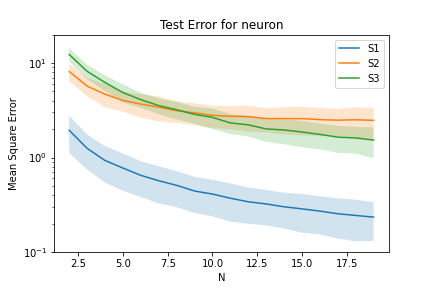}
\end{subfigure}%
\begin{subfigure}{.4\textwidth}
  \centering
  \includegraphics[width=1.\linewidth]{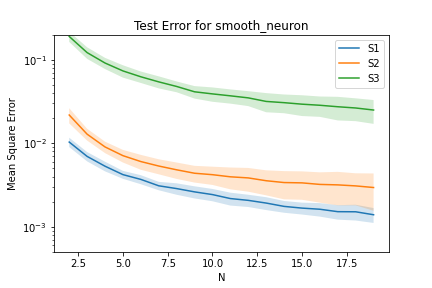}
\end{subfigure}%
\caption{Test Error for varied input size training}
\label{fig:plots_multi}
\end{figure}

We consider the Pointcloud MNIST dataset, after mapping our image to sets.  This dataset is substantially more difficult than regular MNIST, as the induced sets obfuscate the geometric structure of the original images.  The results on Pointcloud MNIST, across differently-sized set representations of images, are given in Table~\ref{tab:mnist}.  The fact that we only consider three-layer networks limits the ability of the model to reconstruct the original image representation and perform comparably to a model acting on regular MNIST.  Nevertheless, we still observe the expected ordering of our functional spaces.  When testing on smaller sets than training, the generalization error increases faster for $\S_3$ than for $\S_1$ and $\S_2$.

\begin{table}
    \centering
    \begin{tabular}{p{5pt}c|c|c}
         & &  Error ($N = 100$) &  Error ($N = 200$)   \\
         \hline
    & $\mathcal{S}_1$  &  $8.03$ & $5.62$ \\
     & $\mathcal{S}_2$  &  $8.25$ & $5.78$ \\
      & $\mathcal{S}_3$  &   $14.45$ & $10.80$ \\
    \end{tabular}
    \caption{Classification test error on Pointcloud MNIST in percent, after images are compressed into sets of size $N$, trained with $N = 200$.}  
    \label{tab:mnist}
\end{table}

In Table~\ref{tab:robust_big_n} we consider the robust mean experiment, using the same hyperparameters except training on sets of larger size ($N = 60$) and plotting MSE on sets of varying size.  As with the smaller scale experiment, we observe that $\S_1$ enjoys a slight advantage over the other methods when restricting attention to the in-distribution generalization setting of $N = 60$, but outside that range the performance is comparable to the naive sample mean, suggesting that out-of-distribution generalization for the robust mean is not easily attainable for these networks.

\begin{table*}[h]
    \centering
    \begin{tabular}{p{5pt}c|c|c|c|c|c}
         & &  $N = 20$ &  $N = 40$ & $N = 60$ & $N = 80$ & $N = 100$   \\
         \hline
    & $\mathcal{S}_1$ & $0.149 \pm 0.039$ & $0.073 \pm 0.023$ & $0.043 \pm 0.004$ & $0.034 \pm 0.004$ & $0.028 \pm 0.003$ \\
     & $\mathcal{S}_2$  &  $0.151 \pm 0.039$ & $0.076 \pm 0.023$ & $0.045 \pm 0.004$ & $0.036 \pm 0.004$ & $0.030 \pm 0.003$ \\
      & $\mathcal{S}_3$  & $0.159 \pm 0.039$ & $0.081 \pm 0.023$ & $0.050 \pm 0.004$ & $0.040 \pm 0.004$ & $0.034 \pm 0.003$ \\
      \hline
      \hline
    & Sample Mean  & $0.152 \pm 0.069$ & $0.066 \pm 0.029$ & $0.055 \pm 0.025$ & $0.034 \pm 0.015$ & $0.026 \pm 0.012$\\
     & Geometric Median  &  $0.137 \pm 0.062$ & $0.063 \pm 0.028$ & $0.047 \pm 0.021$ & $0.032 \pm 0.014$ & $0.025 \pm 0.011$ \\
      & Adversarial Estimator  & $0.472 \pm 0.545$ & $0.386 \pm 0.555$ & $0.346 \pm 0.546$ & $0.282 \pm 0.521$ & $0.206 \pm 0.455$\\
    \end{tabular}
    \caption{Mean squared test error for robust mean estimation among the finite model instantiations and baselines.}  
    \label{tab:robust_big_n}
\end{table*}

\end{document}